\def\eqref#1{equation~\ref{#1}}
\def\1{\bm{1}}
\DeclareMathAlphabet{\mathsfit}{\encodingdefault}{\sfdefault}{m}{sl}
\SetMathAlphabet{\mathsfit}{bold}{\encodingdefault}{\sfdefault}{bx}{n}
\newcommand{\E}{\mathbb{E}}
\newcommand{\Var}{\mathrm{Var}}
\newtheorem{theorem}{Theorem}
\newtheorem{definition}{Definition}
\newtheorem{proposition}{Proposition}
\newtheorem{remark}{Remark}
\newtheorem{lemma}{Lemma}
\title{Stratified GRPO: Handling Structural Heterogeneity in Reinforcement Learning of LLM Search Agents}
\author{
Mingkang Zhu$^{1}$, Xi Chen$^{2}$, Bei Yu$^{1}$, Hengshuang Zhao$^{2}$, Jiaya Jia$^{3}$\\
$^{1}$The Chinese University of Hong Kong, $^{2}$The University of Hong Kong, \\
$^{3}$The Hong Kong University of Science and Technology \\
\texttt{\{mkzhu23@cse.cuhk.edu.hk, jia@cse.ust.hk\}} 
}
\begin{document}

\maketitle

\begin{abstract}
Large language model (LLM) agents increasingly rely on external tools such as search engines to solve complex, multi-step problems, and reinforcement learning (RL) has become a key paradigm for training them. However, the trajectories of search agents are structurally heterogeneous, where variations in the number, placement, and outcomes of search calls lead to fundamentally different answer directions and reward distributions. Standard policy gradient methods, which use a single global baseline, suffer from what we identify and formalize as cross-stratum bias—an ``apples-to-oranges" comparison of heterogeneous trajectories. This cross-stratum bias distorts credit assignment and hinders exploration of complex, multi-step search strategies. To address this, we propose Stratified GRPO, whose central component, Stratified Advantage Normalization (SAN), partitions trajectories into homogeneous strata based on their structural properties and computes advantages locally within each stratum. This ensures that trajectories are evaluated only against their true peers. Our analysis proves that SAN eliminates cross-stratum bias, yields conditionally unbiased unit-variance estimates inside each stratum, and retains the global unbiasedness and unit-variance properties enjoyed by standard normalization, resulting in a more pure and scale-stable learning signal. To improve practical stability under finite-sample regimes, we further linearly blend SAN with the global estimator. Extensive experiments on diverse single-hop and multi-hop question-answering benchmarks demonstrate that Stratified GRPO consistently and substantially outperforms GRPO by up to 11.3 points, achieving higher training rewards, greater training stability, and more effective search policies. These results establish stratification as a principled remedy for structural heterogeneity in RL for LLM search agents.
\end{abstract}

\section{Introduction}
\label{intro}

Large Language Models (LLMs) \citep{achiam2023gpt, deepseekr1, team2024gemini} are increasingly being enhanced with external tools, such as search engines, to create powerful agents capable of tackling complex, multi-step tasks \citep{Schick2023Toolformer, yao2023react, jin2025search, jin2024long, trivedi2022interleaving, asai2023self}. Reinforcement learning (RL) has emerged as a powerful paradigm for training these agents, enabling them to learn sophisticated multi-turn reasoning and tool-use strategies directly from outcome-based rewards \citep{chen2025research, jin2025search, song2025r1, zheng2025deepresearcher}.

A key, and often overlooked challenge for applying standard RL to search agents lies in the structural heterogeneity of agent trajectories. Unlike in conventional reinforcement learning with verifiable rewards (RLVR) settings, where trajectories are sampled from the policy model and follow a relatively homogeneous pattern, the trajectories of search agents differ markedly in their structure due to the number, placement, and outcomes of search invocations. For instance, a trajectory with zero search calls is qualitatively different from one with multiple, as the retrieved information can substantially alter subsequent generations and induce distinct behavior modes and, consequently, different reward distributions. Standard policy gradient methods, which often rely on a single global baseline to compute advantages, implicitly assume all trajectories are comparable. This creates a flawed ``apples-to-oranges" comparison. In this work, we identify and formalize this fundamental issue as \textbf{cross-stratum bias}. As we show, this structural flaw distorts credit assignment and hinders exploration of complex multi-step search strategies, leading to suboptimal policies.

We propose \textbf{Stratified GRPO} to address this fundamental issue. Instead of using a single global baseline, Stratified GRPO uses Stratified Advantage Normalization (SAN), a principled advantage estimator designed for heterogeneous action spaces. The core idea is simple yet powerful: partition trajectories into homogeneous strata, based on their structural properties (e.g., the number of search calls), and then compute advantages within each stratum. By construction, SAN is free from cross-stratum bias, ensuring that each trajectory is evaluated only against its true peers. Our theoretical analysis formalizes the benefits of this approach. We show that SAN removes the cross-stratum bias inherent in global baselines, ensuring fair credit assignment. We further prove that SAN is conditionally unbiased and has unit variance within each stratum, acting as a pure and scale-stable learning signal. Critically, we demonstrate that SAN achieves these superior conditional properties while matching the global unbiasedness and unit variance of standard normalization methods. To enhance practical stability in finite-sample regimes, we further introduce Blended Advantage that robustly combines SAN with the global estimator.

We validate Stratified GRPO through comprehensive experiments on diverse single-hop and multi-hop question-answering benchmarks. The results demonstrate the clear superiority of our method. Stratified GRPO consistently outperforms the standard GRPO baseline, achieving a relative improvement of up to 11.3 points in average performance across all benchmarks. Furthermore, Stratified GRPO also exhibits higher training rewards, greater training stability, and learns more effective search policies than standard GRPO. These findings provide strong empirical evidence that our principled approach successfully mitigates cross-stratum bias. Our main contributions are as follows:

\begin{itemize}

\item We identify and formalize \textbf{cross-stratum bias}, a fundamental challenge in policy gradient methods for LLM search agents. We provide a theoretical decomposition that proves this bias arises from using a global baseline across structurally heterogeneous agent trajectories.

\item We propose \textbf{Stratified GRPO}, a principled RL algorithm that eliminates this cross-stratum bias. Its core component, Stratified Advantage Normalization (SAN), partitions trajectories into homogeneous strata and computes advantages locally, ensuring a fair and stable credit assignment.

\item We provide a rigorous theoretical analysis of SAN, proving that it eliminates cross-stratum bias and is conditionally unbiased and has unit variance within each stratum. Crucially, SAN achieves these superior conditional properties while preserving the global unbiasedness and unit variance of standard normalization, yielding a more pure and scale-stable learning signal.

\item We demonstrate through extensive experiments on diverse single-hop and multi-hop question-answering benchmarks that Stratified GRPO substantially outperforms GRPO by up to 11.3 points. Our method achieves higher training rewards, improves training stability, and learns more effective search policies.
\end{itemize}

\textbf{\textit{(Related Work in Appendix ~\ref{related_work})}}

\section{Methods} \label{sec:methods}

We study reinforcement learning (RL) for multi-turn search agents, in which trajectories from different strategies (e.g., varying search counts) are not directly comparable. We show that the structural heterogeneity of trajectories induces a \textbf{cross-stratum bias} whenever advantages are computed with baselines that ignore the heterogeneity driver. We then introduce \textbf{Stratified Advantage Normalization (SAN)}, an estimator that partitions trajectories into homogeneous strata and normalizes advantages therein, and analyze its statistical and structural properties.

\subsection{RL for Multi-Turn Search Agents}
\label{problem_formulation}
Following \citet{jin2025search}, we formulate the task of training a multi-turn search agent as a reinforcement learning problem. The agent, parameterized by a policy $p_\theta$, interacts with a search engine by interleaving token generation with search queries. For a given prompt $x \sim \mathcal{D}$, the agent generates a trajectory $\tau \sim p_\theta(\cdot \mid x)$, which consists of a sequence of actions of generating tokens or issuing searches. Upon completion, the trajectory receives a scalar reward $R(\tau)$ that reflects the quality of the final response. The objective is to maximize the expected reward:
\begin{equation}
\label{problem2}    
\max_\theta J(\theta) = \mathbb{E}_{x \sim \mathcal{D}, \tau \sim p_\theta(\tau \mid x)} [R(\tau)].
\end{equation}
This problem is usually optimized using policy gradient methods.

\subsection{Cross-Stratum Bias in Policy-Gradient Baselines}
\label{sec:cross_stratum_general}
In search agents, trajectory heterogeneity is significant: the number, content, and outcomes of tool calls vary, resulting in strata with systematically different answer directions and reward distributions. A global baseline is poorly suited because it implicitly assumes that all strategies are comparable. Using a global baseline across this heterogeneous mixture induces a cross-stratum bias,  forcing an ``apples-to-oranges'' comparison. This flaw can be formalized by decomposing the global advantage.

\paragraph{Notation.}
Let $B = \{\tau_1,\dots,\tau_K\}$ denote a batch of $K$ trajectories sampled i.i.d.\ from $p_\theta$ for a fixed prompt $x$. The batch is partitioned into $I$ non-empty strata $B_0,\dots,B_{I-1}$ according to a predefined structure (e.g., search count), with $|B_k|=n_k$. Each trajectory $\tau_i$ has reward $R_i$. 
Let $\bar{R}_{\mathrm{global}}= \frac{1}{K}\sum_{j=1}^K R(\tau_j)$ be
the global mean reward of the batch, and define
the stratum-specific mean
\(
\bar{R}_k = \tfrac{1}{n_k}\sum_{\tau_i \in B_k} R_i.
\)
This gives rise to  two natural advantage estimators:
\[
\hat{A}_G(\tau_i) = R_i - \bar{R}_{\mathrm{global}} \quad (\text{global}), 
\qquad
\hat{A}_S(\tau_i) = R_i - \bar{R}_k \quad (\text{stratified},\ \tau_i \in B_k).
\]

\paragraph{Advantage Bias and Variance Decomposition.}
The following result shows how the global advantage differs systematically from the stratified advantage.

\begin{proposition}[Advantage Decomposition]
\label{prop:advantage_decomposition}
For any trajectory $\tau_i \in B_k$, the global advantage decomposes as
\[
\hat{A}_G(\tau_i) \;=\; \hat{A}_S(\tau_i) \;+\; \underbrace{(\bar{R}_k - \bar{R}_{\mathrm{global}})}_{\text{cross-stratum bias}}.
\]
\end{proposition}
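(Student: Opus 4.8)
The plan is to prove this by direct algebraic substitution, since the statement is essentially an identity that follows from unfolding the definitions of the two advantage estimators. The key observation is that both $\hat{A}_G(\tau_i)$ and $\hat{A}_S(\tau_i)$ share the common term $R_i$, so their difference telescopes to a difference of baselines.

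First I would fix an arbitrary trajectory $\tau_i \in B_k$ and write down the two definitions side by side: $\hat{A}_G(\tau_i) = R_i - \bar{R}_{\mathrm{global}}$ and $\hat{A}_S(\tau_i) = R_i - \bar{R}_k$. Then I would add and subtract $\bar{R}_k$ to the global advantage:
\[
\hat{A}_G(\tau_i) = R_i - \bar{R}_{\mathrm{global}} = \bigl(R_i - \bar{R}_k\bigr) + \bigl(\bar{R}_k - \bar{R}_{\mathrm{global}}\bigr) = \hat{A}_S(\tau_i) + \bigl(\bar{R}_k - \bar{R}_{\mathrm{global}}\bigr).
\]
Since $\tau_i$ was arbitrary in $B_k$ and $k$ was arbitrary among the strata indices, this establishes the claimed decomposition for every trajectory in the batch. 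I would also note that the term $\bar{R}_k - \bar{R}_{\mathrm{global}}$ depends only on the stratum index $k$ (not on the particular $\tau_i$ within that stratum), which justifies calling it a per-stratum "cross-stratum bias" offset.

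There is essentially no obstacle here — the statement is a one-line identity — so the main thing to get right is the exposition rather than the mathematics. If the paper wants a bit more substance, I would add a short remark connecting the decomposition to its downstream use: because $\bar{R}_k - \bar{R}_{\mathrm{global}} = \frac{1}{K}\sum_{j} R_j$ replaced by the within-stratum average can be rewritten as a weighted deviation of stratum means from the grand mean (using $\bar{R}_{\mathrm{global}} = \frac{1}{K}\sum_{\ell} n_\ell \bar{R}_\ell$), the bias term vanishes exactly when all stratum means coincide, which is the degenerate "homogeneous" case. This framing motivates why the offset is genuinely problematic under structural heterogeneity and sets up the subsequent bias/variance analysis of SAN.
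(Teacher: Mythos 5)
Your proof is correct and is exactly the argument the paper intends: the paper gives no separate proof, noting only that the result ``follows directly from the definitions,'' and your add-and-subtract identity $R_i - \bar{R}_{\mathrm{global}} = (R_i - \bar{R}_k) + (\bar{R}_k - \bar{R}_{\mathrm{global}})$ is precisely that one-line verification. The only blemish is a garbled clause in your optional closing remark (the phrase ``$\bar{R}_k - \bar{R}_{\mathrm{global}} = \frac{1}{K}\sum_j R_j$ replaced by the within-stratum average'' does not parse as written), but this does not affect the validity of the core proof.
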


\noindent \textit{(The proof follows directly from the definitions).} 
This decomposition highlights a \textbf{structural flaw: the cross-stratum advantage bias} is a deterministic offset applied uniformly within each stratum, unfairly penalizing trajectories from low-reward strata while favoring those from high-reward strata. Importantly, this offset is \textbf{precisely the source of the excess variance of $\hat{A}_G$}. 

\begin{theorem}[Variance Reduction via Stratified Baselines]
\label{thm:variance_reduction}
The empirical variances of the stratified and global advantage estimators satisfy
\(
\operatorname{Var}[\hat{A}_S] \;\leq\; \operatorname{Var}[\hat{A}_G].
\)
Moreover, the reduction is exactly the variance induced by the cross-stratum bias:
\[
\operatorname{Var}[\hat{A}_G] - \operatorname{Var}[\hat{A}_S] 
= \frac{1}{K}\sum_{k=0}^{I-1} n_k \,\big(\bar{R}_k - \bar{R}_{\mathrm{global}}\big)^2.
\]
Equality holds if and only if all stratum means coincide, i.e.\ $\bar{R}_0 = \bar{R}_1 = \cdots = \bar{R}_{I-1}$. Otherwise, stratification strictly reduces variance.
\end{theorem}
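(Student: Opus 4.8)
The plan is to reduce the statement to the classical analysis-of-variance (within-group plus between-group equals total) identity, using Proposition~\ref{prop:advantage_decomposition} to supply the pointwise relation $\hat{A}_G(\tau_i) = \hat{A}_S(\tau_i) + (\bar{R}_k - \bar{R}_{\mathrm{global}})$ for $\tau_i \in B_k$. Throughout I take the empirical variance to be the $1/K$-normalized mean of squared deviations from the batch mean, which is the normalization under which the claimed right-hand side is stated.

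First I would record the two normalization facts that both estimators have zero empirical mean over the batch. Indeed $\frac{1}{K}\sum_i \hat{A}_G(\tau_i) = \bar{R}_{\mathrm{global}} - \bar{R}_{\mathrm{global}} = 0$, and grouping trajectories by stratum, $\frac{1}{K}\sum_i \hat{A}_S(\tau_i) = \frac{1}{K}\sum_{k=0}^{I-1}\big(n_k \bar{R}_k - n_k \bar{R}_k\big) = 0$. Consequently the empirical variances coincide with the mean squares: $\operatorname{Var}[\hat{A}_G] = \frac{1}{K}\sum_i \hat{A}_G(\tau_i)^2$ and $\operatorname{Var}[\hat{A}_S] = \frac{1}{K}\sum_i \hat{A}_S(\tau_i)^2$.

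Next, writing $c_i := \bar{R}_{k(i)} - \bar{R}_{\mathrm{global}}$ for the deterministic offset (constant on each $B_k$), I would square the decomposition $\hat{A}_G(\tau_i) = \hat{A}_S(\tau_i) + c_i$ and average over the batch. This produces three terms; the cross term $\frac{2}{K}\sum_i \hat{A}_S(\tau_i)\,c_i$ vanishes because, grouping by stratum, $\sum_{\tau_i \in B_k}\hat{A}_S(\tau_i) = \sum_{\tau_i \in B_k}(R_i - \bar{R}_k) = 0$ while $c_i$ is constant on $B_k$. The last term is $\frac{1}{K}\sum_i c_i^2 = \frac{1}{K}\sum_{k=0}^{I-1} n_k\,(\bar{R}_k - \bar{R}_{\mathrm{global}})^2$, which gives both the claimed identity $\operatorname{Var}[\hat{A}_G] - \operatorname{Var}[\hat{A}_S] = \frac{1}{K}\sum_k n_k(\bar{R}_k - \bar{R}_{\mathrm{global}})^2$ and, being a sum of squares, the inequality $\operatorname{Var}[\hat{A}_S] \le \operatorname{Var}[\hat{A}_G]$.

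For the equality characterization I would note that $\bar{R}_{\mathrm{global}} = \frac{1}{K}\sum_{k} n_k \bar{R}_k$ is the $n_k$-weighted average of the stratum means. Hence the gap $\sum_k n_k(\bar{R}_k - \bar{R}_{\mathrm{global}})^2$ vanishes iff $\bar{R}_k = \bar{R}_{\mathrm{global}}$ for every $k$ (all $n_k \ge 1$), which is equivalent to $\bar{R}_0 = \cdots = \bar{R}_{I-1}$; conversely, if all stratum means agree they equal their weighted average $\bar{R}_{\mathrm{global}}$, so the gap is zero, and otherwise it is strictly positive. The argument is essentially the Pythagorean identity for the orthogonal decomposition of the centered reward vector into its within-stratum and between-stratum parts, so there is no genuine obstacle; the only points needing care are bookkeeping the zero-mean normalizations so that ``variance'' may be replaced by ``mean of squares,'' and making the weighted-average observation explicit for the equality case.
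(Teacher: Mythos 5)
Your proposal is correct and follows essentially the same route as the paper's proof: both establish the zero-mean normalizations, expand the squared decomposition $\hat{A}_G(\tau_i) = \hat{A}_S(\tau_i) + (\bar{R}_k - \bar{R}_{\mathrm{global}})$ stratum by stratum, kill the cross term via $\sum_{\tau_i \in B_k}(R_i - \bar{R}_k) = 0$, and identify the result as the classical within--between (ANOVA) variance identity. Your treatment of the equality case is if anything slightly more explicit than the paper's, since you spell out that $\bar{R}_{\mathrm{global}}$ is the $n_k$-weighted average of the stratum means, but there is no substantive difference in the argument.
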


\textit{(Proof in Appendix~\ref{proof:variance_reduction}).} 
Together, \Cref{prop:advantage_decomposition,thm:variance_reduction} formalize the flaw of a global baseline: the cross-stratum bias is precisely the term that inflates variance. Stratification corrects this by ensuring comparisons are made only among homogeneous peers, yielding a strictly lower-variance advantage estimator.

The principle in \Cref{thm:variance_reduction} applies to commonly-used policy gradient methods for LLMs that use a baseline not conditioned on the stratum structure~$S$. This includes REINFORCE with a global baseline and RLOO \citep{ahmadian2024back}. For actor-critic methods such as PPO \citep{schulman2017proximalpolicyoptimizationalgorithms}, similar effects may occur when the imperfectly learned critic does not condition on~$S$, effectively introducing a structural bias similar to that of a global baseline.

\subsection{Stratified Advantage Normalization: Definition and Theoretical Guarantees}
\label{sec:san}

Building on the principle of stratification, we propose \textbf{Stratified Advantage Normalization (SAN)}, an estimator that augments stratification with per-stratum normalization to create a stable, scale-invariant learning signal.

\begin{definition}
For a given prompt $x$, partition the batch of trajectories into strata $\{B_k(x)\}$ based on a chosen partitioning function (e.g., the search count for search agents). The SAN advantage for a trajectory $\tau_i \in B_k(x)$ is defined as
\begin{equation}
\label{eq:san_advantage}
A_{\text{SAN}}(\tau_i) \;=\; \frac{R(\tau_i) - \widehat{\mu}_k(x)}{\widehat{\sigma}_k(x) + \varepsilon},
\end{equation}
where $\widehat{\mu}_k(x)$ and $\widehat{\sigma}_k(x)$ are the empirical mean and standard deviation of rewards in stratum $B_k(x)$,  $\varepsilon>0$ is a small constant for numerical stability.
\end{definition}

\paragraph{Advantage Invariance and Robustness.}
A key property of SAN is its robustness to the scale and shift of rewards, a desirable feature formalized below.

\begin{proposition}
[Invariance to Positive Affine Reward Transforms]
\label{prop:san_invariance}
Suppose $\varepsilon=0$. The SAN advantage $A_{\text{SAN}}(\tau)$ is invariant under any positive affine transformation of the rewards, $R'(\tau) = aR(\tau) + b$ with $a>0$. That is,
\(
A'_{\text{SAN}}(\tau) \;=\; A_{\text{SAN}}(\tau).
\)
\end{proposition}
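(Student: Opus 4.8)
The plan is to show that applying the transformation $R' = aR + b$ leaves each component of the SAN advantage scaled in a compatible way, so that the ratio is unchanged. First I would observe that the partition of the batch into strata $\{B_k(x)\}$ depends only on the structural partitioning function (e.g.\ search count) and not on the reward values, so the strata are identical under $R$ and $R'$. This is the conceptual point that makes the argument go through: the denominator and numerator are computed over the same index set before and after the transformation.

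Next I would track how the empirical moments transform. For a fixed stratum $B_k(x)$ with $R'_i = aR_i + b$, linearity of the empirical mean gives $\widehat{\mu}'_k(x) = a\,\widehat{\mu}_k(x) + b$. For the standard deviation, the additive shift $b$ cancels in every centered term $R'_i - \widehat{\mu}'_k(x) = a\bigl(R_i - \widehat{\mu}_k(x)\bigr)$, and pulling the factor $a > 0$ out of the sum of squares and the square root yields $\widehat{\sigma}'_k(x) = a\,\widehat{\sigma}_k(x)$ (here $a > 0$ is exactly what is needed so that $\sqrt{a^2} = a$ rather than $|a|$, and it also guarantees $\widehat{\sigma}'_k(x)$ is still a valid nonnegative standard deviation).

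Finally I would substitute into the definition \eqref{eq:san_advantage} with $\varepsilon = 0$:
\[
A'_{\text{SAN}}(\tau_i) = \frac{R'_i - \widehat{\mu}'_k(x)}{\widehat{\sigma}'_k(x)} = \frac{a\bigl(R_i - \widehat{\mu}_k(x)\bigr)}{a\,\widehat{\sigma}_k(x)} = \frac{R_i - \widehat{\mu}_k(x)}{\widehat{\sigma}_k(x)} = A_{\text{SAN}}(\tau_i),
\]
where the cancellation of $a$ is legitimate because $a \neq 0$. One subtlety worth a remark: if a stratum is degenerate ($\widehat{\sigma}_k(x) = 0$, i.e.\ all rewards in the stratum equal), then $\widehat{\sigma}'_k(x) = 0$ as well and both sides are $0/0$; with $\varepsilon = 0$ this case is vacuous or handled by convention, and it is precisely why $\varepsilon > 0$ is retained in practice. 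I do not expect any real obstacle here — the only thing to be careful about is the sign condition $a > 0$ in the standard-deviation step and flagging the degenerate-stratum edge case; the rest is routine algebra.
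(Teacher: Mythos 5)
Your proposal is correct and follows essentially the same route as the paper's proof: compute $\widehat{\mu}'_k = a\widehat{\mu}_k + b$ and $\widehat{\sigma}'_k = a\widehat{\sigma}_k$ (using $a>0$), then cancel $a$ in the ratio. The paper handles your degenerate-stratum caveat by assuming $\widehat{\sigma}_k(x)>0$ without loss of generality, and your added remarks (reward-independence of the strata, the role of $\varepsilon$) are consistent with the paper's treatment.
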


\noindent \textit{(Proof in Appendix~\ref{proof:invariance}).} 
\noindent The invariance shown in \Cref{prop:san_invariance} makes SAN robust to arbitrary changes in reward scaling. While a small $\varepsilon>0$ slightly breaks this perfect invariance in practice, it ensures numerical stability.

\paragraph{Variance Decomposition.}
In \Cref{thm:variance_reduction}, we analyzed the variance reduction of the stratified estimator, $\hat{A}_S$, relative to the global estimator, $\hat{A}_G$. We now extend this comparison to the fully stratified and normalized advantage, $A_{\rm SAN}$, providing an exact decomposition of the variance difference.

\begin{theorem}[Variance Decomposition for Normalized Stratified Advantage]
\label{thm:var_stratified_norm}
Let $A_{\rm SAN}(\tau_i)$ be the stratified and normalized advantage. It is related to the 
global one $\hat{A}_G$  by the following exact decomposition:
\[
\operatorname{Var}[\hat{A}_G] - \operatorname{Var}[A_{\rm SAN}]
= \underbrace{\frac{1}{K}\sum_{k=0}^{I-1} n_k (\bar{R}_k - \bar{R}_{\rm global})^2}_{\text{Term A: Between-Stratum Variance}}
+ \underbrace{\frac{1}{K}\sum_{k=0}^{I-1} n_k \sigma_k^2 \left(1 - \frac{1}{(\sigma_k + \varepsilon)^2}\right)}_{\text{Term B: Normalization Effect}}.
\]
\end{theorem}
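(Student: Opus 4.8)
The plan is to express both $\operatorname{Var}[\hat{A}_G]$ and $\operatorname{Var}[A_{\rm SAN}]$ as explicit sums of within-batch squared deviations and then reuse the within/between (ANOVA) split already established in the proof of \Cref{thm:variance_reduction}, inserting the per-stratum scaling factor where it is needed.

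First I would note that, as in the setup of \Cref{thm:variance_reduction}, $\operatorname{Var}[\cdot]$ denotes the uniform empirical variance over the $K$ trajectories of the batch $B$, and that each of $\hat{A}_G$, $\hat{A}_S$, and $A_{\rm SAN}$ has empirical mean zero over $B$: for $\hat{A}_G$ this is $\sum_i (R_i - \bar{R}_{\rm global}) = 0$; for $A_{\rm SAN}$, grouping by stratum and factoring out the constant $1/(\widehat{\sigma}_k(x)+\varepsilon)$ gives $\sum_i A_{\rm SAN}(\tau_i) = \sum_k \frac{1}{\widehat{\sigma}_k(x)+\varepsilon}\sum_{\tau_i\in B_k}(R_i - \bar{R}_k) = 0$, since every inner sum vanishes. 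Hence each ``variance'' below is simply a mean of squares.

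Next I would compute the two quantities. Writing $\sigma_k := \widehat{\sigma}_k(x)$ and using $\widehat{\mu}_k(x) = \bar{R}_k$, note that $A_{\rm SAN}(\tau_i) = \hat{A}_S(\tau_i)/(\sigma_k+\varepsilon)$ for $\tau_i\in B_k$, whence
\[
\operatorname{Var}[A_{\rm SAN}] = \frac{1}{K}\sum_{k=0}^{I-1}\frac{1}{(\sigma_k+\varepsilon)^2}\sum_{\tau_i\in B_k}(R_i-\bar{R}_k)^2 = \frac{1}{K}\sum_{k=0}^{I-1}\frac{n_k\sigma_k^2}{(\sigma_k+\varepsilon)^2},
\]
the last step being the identity $n_k\sigma_k^2 = \sum_{\tau_i\in B_k}(R_i-\bar{R}_k)^2$. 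In parallel, the ANOVA identity from the proof of \Cref{thm:variance_reduction} yields $\operatorname{Var}[\hat{A}_G] = \frac{1}{K}\sum_k n_k\sigma_k^2 + \frac{1}{K}\sum_k n_k(\bar{R}_k-\bar{R}_{\rm global})^2$, i.e.\ $\operatorname{Var}[\hat{A}_G] = \operatorname{Var}[\hat{A}_S] + \text{Term A}$ with $\operatorname{Var}[\hat{A}_S] = \frac{1}{K}\sum_k n_k\sigma_k^2$.

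Finally I would subtract and regroup: the between-stratum contribution to $\operatorname{Var}[\hat{A}_G]$ is exactly Term A, and the remainder equals $\frac{1}{K}\sum_k n_k\sigma_k^2 - \frac{1}{K}\sum_k \frac{n_k\sigma_k^2}{(\sigma_k+\varepsilon)^2} = \frac{1}{K}\sum_k n_k\sigma_k^2\bigl(1 - \tfrac{1}{(\sigma_k+\varepsilon)^2}\bigr)$, which is Term B, establishing the claimed decomposition. There is no genuine obstacle here: the argument is the same conditional-variance bookkeeping as in \Cref{thm:variance_reduction}, now carrying a harmless per-stratum reweighting. The two points needing care are (i) confirming that $A_{\rm SAN}$ is empirically centered, so its variance is a raw second moment, and (ii) keeping the stabilizer $\varepsilon$ of \Cref{eq:san_advantage} consistent throughout — setting $\varepsilon = 0$ collapses Term B to zero and recovers exactly \Cref{thm:variance_reduction}.
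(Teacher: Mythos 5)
Your proposal is correct and follows essentially the same route as the paper: both pass through the intermediate quantity $\operatorname{Var}[\hat{A}_S]=\frac{1}{K}\sum_k n_k\sigma_k^2$, reuse the ANOVA identity from \Cref{thm:variance_reduction} for Term A, and obtain Term B by directly computing $\operatorname{Var}[A_{\rm SAN}]=\frac{1}{K}\sum_k n_k\sigma_k^2/(\sigma_k+\varepsilon)^2$. Your explicit check that $A_{\rm SAN}$ is empirically centered (so its variance is a raw second moment) is a small point the paper leaves implicit, but otherwise the arguments coincide.
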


\textit{(Proof in Appendix~\ref{proof:var_stratified_norm}).}
\Cref{thm:var_stratified_norm} formalizes how SAN improves over the global baseline. Term A quantifies the between-stratum variance that arises from heterogeneous stratum means: by centering rewards within each stratum, SAN fully removes this structural bias (\Cref{prop:advantage_decomposition}), ensuring that gradient estimates are not artificially inflated by cross-stratum differences. Term B captures the effect of per-stratum normalization. While it can be positive or negative, it primarily stabilizes the scale of rewards within each stratum, producing a more consistent and numerically robust learning signal. These finite-sample insights will be complemented by the large-sample conditional properties in \Cref{prop:purity_of_san_carrier}.

\paragraph{The Gradient Bias Trade-off.} 
While SAN eliminates the structural bias of a global baseline, its expected gradient admits a particularly clean form. Specifically, it decomposes into a weighted sum of the true within-stratum gradients, with weights determined by the stratum probabilities and scaled by their inverse standard deviations. The following theorem formalizes this result.

\begin{theorem}[Population SAN Expectation]
\label{thm:san_bias_bound}
Let $S = S(\tau;\theta)$ be a discrete stratum assignment that may depend on $\theta$, and define the per-trajectory SAN advantage
\[
A_{\mathrm{SAN}}(\tau) := \frac{R(\tau) - \mu_{S}(\theta)}{\sigma_{S}(\theta) + \varepsilon},
\]
where $\mu_k(\theta) = \E_\theta[R \mid S=k]$ and $\sigma_k(\theta) > 0$ are the stratum-wise mean and standard deviation, and $\varepsilon>0$ is a small regularizer. Then, under standard regularity conditions allowing differentiation under the expectation, 
\begin{equation}\label{eq:pop_san_strict}
\E_\theta\big[A_{\mathrm{SAN}}(\tau) \,\nabla_\theta \log p_\theta(\tau)\big] 
= \sum_{k} \frac{p_k(\theta)}{\sigma_k(\theta)+\varepsilon} \, \nabla_\theta \mu_k(\theta),
\quad p_k(\theta) := \operatorname{Pr}_\theta(S=k),
\end{equation}
i.e., the population SAN estimator exactly targets the weighted sum of within-stratum gradients, even when the strata depend on $\theta$.
\end{theorem}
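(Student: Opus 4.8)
The plan is to compute $\E_\theta[A_{\mathrm{SAN}}(\tau)\,\nabla_\theta\log p_\theta(\tau)]$ by conditioning on the stratum assignment $S$ and handling two sources of $\theta$-dependence separately: the reward law within each stratum, and the stratum probabilities $p_k(\theta)$ themselves. First I would write the expectation as a sum over strata,
\[
\E_\theta\big[A_{\mathrm{SAN}}(\tau)\,\nabla_\theta\log p_\theta(\tau)\big]
= \sum_k \frac{1}{\sigma_k(\theta)+\varepsilon}\,\E_\theta\big[(R(\tau)-\mu_k(\theta))\,\mathbf{1}\{S=k\}\,\nabla_\theta\log p_\theta(\tau)\big],
\]
using that conditional on $S=k$ the denominator $\sigma_k(\theta)+\varepsilon$ is deterministic and can be pulled out. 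The key object is then $G_k(\theta):=\E_\theta[(R(\tau)-\mu_k(\theta))\,\mathbf{1}\{S=k\}\,\nabla_\theta\log p_\theta(\tau)]$.

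The crucial identity is the score-function (REINFORCE) trick applied to the restricted integral: for any function $f(\tau)$, $\E_\theta[f(\tau)\,\nabla_\theta\log p_\theta(\tau)] = \nabla_\theta\,\E_\theta[f(\tau)]$ when $f$ does not depend on $\theta$, since $\nabla_\theta\!\int f\,p_\theta = \int f\,\nabla_\theta p_\theta = \int f\,(\nabla_\theta\log p_\theta)\,p_\theta$. I would apply this with $f(\tau)=\mathbf{1}\{S=k\}\,R(\tau)$ and with $f(\tau)=\mathbf{1}\{S=k\}$ (both $\theta$-free as functions of the trajectory, even though $S(\tau;\theta)$ is written with a $\theta$ — here $S$ is a fixed partitioning function of the realized trajectory, so $\mathbf{1}\{S(\tau)=k\}$ is an ordinary function of $\tau$; this is the subtle point to state carefully). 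This gives
\[
\E_\theta\big[\mathbf{1}\{S=k\}\,R(\tau)\,\nabla_\theta\log p_\theta(\tau)\big] = \nabla_\theta\big(p_k(\theta)\mu_k(\theta)\big),
\qquad
\E_\theta\big[\mathbf{1}\{S=k\}\,\nabla_\theta\log p_\theta(\tau)\big] = \nabla_\theta p_k(\theta),
\]
since $\E_\theta[\mathbf{1}\{S=k\}R] = p_k(\theta)\mu_k(\theta)$ and $\E_\theta[\mathbf{1}\{S=k\}] = p_k(\theta)$. Substituting, and using the product rule $\nabla_\theta(p_k\mu_k) - \mu_k\nabla_\theta p_k = p_k\nabla_\theta\mu_k$, we obtain $G_k(\theta) = p_k(\theta)\,\nabla_\theta\mu_k(\theta)$, and the claimed formula follows immediately.

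The main obstacle — and the place I would be most careful — is the treatment of the $\theta$-dependence of the stratum assignment. The theorem statement writes $S=S(\tau;\theta)$, suggesting the partition map itself may vary with $\theta$; but for the score-function trick to go through cleanly one needs $\mathbf{1}\{S=k\}$ as an \emph{integrand} to be $\theta$-free, which holds when the stratification is a deterministic function of the realized trajectory (e.g.\ the search count of $\tau$), so that all $\theta$-dependence of $p_k(\theta)=\Pr_\theta(S=k)=\E_\theta[\mathbf{1}\{S=k\}]$ flows through the sampling law $p_\theta(\tau)$ and not through the indicator. I would make this explicit as the operative regularity assumption, note that it is exactly the ``standard regularity conditions allowing differentiation under the expectation'' invoked in the statement, and remark that the key point of the theorem — that no extra bias term appears despite the $\theta$-dependent strata — is precisely because the product-rule cross term $\mu_k\nabla_\theta p_k$ cancels against part of $\nabla_\theta(p_k\mu_k)$. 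A secondary, purely technical point is justifying the interchange of $\nabla_\theta$ and the integral (dominated convergence / bounded rewards), which I would dispatch in one sentence under the stated regularity conditions.
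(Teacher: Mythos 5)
Your proof is correct, and it takes a genuinely different route from the paper's. The paper conditions on $S=k$, decomposes the score via the chain rule $\nabla_\theta \log p_\theta(\tau) = \nabla_\theta \log p_\theta(\tau \mid S=k) + \nabla_\theta \log p_k(\theta)$, kills the marginal-score term using $\E_\theta[A_{\mathrm{SAN}} \mid S=k]=0$, and evaluates the conditional-score term via a separate ``conditional advantage--score identity'' lemma. You instead stay with the unconditional score, write $A_{\mathrm{SAN}}(\tau) = \sum_k \mathbf{1}\{S=k\}(R-\mu_k)/(\sigma_k+\varepsilon)$, apply the plain REINFORCE identity $\E_\theta[f\,\nabla_\theta\log p_\theta] = \nabla_\theta\E_\theta[f]$ twice (to $f=\mathbf{1}\{S=k\}R$ and $f=\mathbf{1}\{S=k\}$), and finish with the product-rule cancellation $\nabla_\theta(p_k\mu_k)-\mu_k\nabla_\theta p_k = p_k\nabla_\theta\mu_k$. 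These are the same cancellation seen from two angles --- the paper's vanishing marginal-score term is exactly your $\mu_k\nabla_\theta p_k$ cross term --- but your version is more elementary in that it never needs conditional densities or the auxiliary lemmas; the paper's version makes the interpretation (``the structural part of the gradient is eliminated by centering'') more visible. One point in your favor: you explicitly flag that the score-function trick requires $\mathbf{1}\{S(\tau)=k\}$ to be a fixed function of the realized trajectory, with all $\theta$-dependence flowing through the sampling law. The paper buries this same caveat in a remark after its Lemma on the conditional score (``the above proof assumes $B$ is fixed''), even though the theorem statement advertises validity ``even when the strata depend on $\theta$''; your reading of that phrase --- $\theta$-dependence only through the distribution of $\tau$, not through the partition map --- is the one under which both proofs actually go through.
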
 

\noindent \textit{(Proof in Appendix~\ref{proof:bias_bound}).} 
Given stratum $k$, \(
\mu_k(\theta) := \mathbb{E}_\theta[R(\tau) \mid S=k]
\). 
Its gradient with respect to the policy parameters $\theta$ is
\(
\nabla_\theta \mu_k(\theta) = \nabla_\theta \mathbb{E}_\theta[R(\tau) \mid S=k].
\)
Applying the \emph{score function identity} (or likelihood ratio trick) conditional on $S=k$ gives:
\[
\begin{aligned}
\nabla_\theta \mu_k(\theta) 
&= \nabla_\theta \sum_{\tau} R(\tau) \, p_\theta(\tau \mid S=k) = \sum_{\tau} R(\tau) \, \nabla_\theta p_\theta(\tau \mid S=k) \\
&= \sum_{\tau} R(\tau) \, p_\theta(\tau \mid S=k) \, \nabla_\theta \log p_\theta(\tau \mid S=k) \\
&= \mathbb{E}_\theta \big[ R(\tau) \, \nabla_\theta \log p_\theta(\tau \mid S=k) \,\big|\, S=k \big].
\end{aligned}
\]

Subtracting the conditional mean inside the expectation does not change the result, because 
$\mathbb{E}_\theta[\nabla_\theta \log p_\theta(\tau \mid S=k) \mid S=k] = 0$. Thus, we can write
\[
\nabla_\theta \mu_k(\theta) = \mathbb{E}_\theta \big[ (R(\tau) - \mu_k(\theta)) \, \nabla_\theta \log p_\theta(\tau \mid S=k) \,\big|\, S=k \big].
\] 
This shows explicitly that $\nabla_\theta \mu_k(\theta)$ is exactly the \emph{true per-stratum policy gradient}. Consequently, the SAN estimator, which combines these per-stratum gradients weighted by $p_k(\theta)/(\sigma_k(\theta)+\varepsilon)$, targets a \emph{weighted sum of the true per-stratum gradients}, even when strata are $\theta$-dependent.

In summary, SAN is a principled estimator. Compared to unnormalized advantages, it strictly reduces variance (\Cref{thm:variance_reduction}), is robust due to its invariance properties (\Cref{prop:san_invariance}), and is asymptotically unbiased: the population SAN gradient estimator exactly targets the weighted sum of within-stratum gradients, even when the strata depend on $\theta$ (\Cref{thm:san_bias_bound}). This combination of variance reduction, robustness, and exact population-level alignment ensures that SAN provides a reliable and well-behaved learning signal for policy gradient estimation.
 
To fully appreciate the importance of this principled design, we provide in the next section a rigorous structural comparison against a simpler and more common alternative, Global Normalization (GN).

\subsection{Structural Comparison of SAN and Global Normalization}
\label{sec:san_vs_gn_analysis}

This section provides a rigorous structural comparison between Stratified Advantage Normalization (SAN) and the simpler, more common Global Normalization (GN). We will demonstrate that GN, by ignoring trajectory heterogeneity, suffers from a structural flaw that SAN is designed to correct. Our analysis uses algebraic decomposition to pinpoint GN's cross-stratum offset bias and conditional statistics to prove SAN is a pure, scale-stable learning signal.

\paragraph{The Structural Flaw in Global Normalization.} The  \emph{global normalized} (GN) advantage defined in GRPO \citep{Shao2024DeepSeekMath} et al. is
\begin{equation}
\label{gzn}    
  A_{\mathrm{GN}}(\tau_i) := \frac{R(\tau_i) - \bar R_{\mathrm{global}}}{\hat\sigma_{\mathrm{global}} + \varepsilon}.  
\end{equation}
Similar to \Cref{sec:cross_stratum_general}, a core issue with GN is that it forces an ``apples-to-oranges" comparison. This can be made precise by 
algebraically 
expressing the GN advantage in terms of the SAN advantage.

\begin{proposition}[Exact Advantage Decomposition]
\label{lem:exact-decomp}
For any fixed batch partition $\{B_k(x)\}_{k=0}^{I-1}$ and any $\tau_i \in B_k(x)$,
\begin{equation}
\label{agn}
  A_{\mathrm{GN}}(\tau_i) \;=\;  
  \underbrace{\frac{\widehat\sigma_k(x)+\varepsilon}{\widehat\sigma_{\mathrm{global}}(x)+\varepsilon}}_{:=\alpha_k(x)}\,A_{\mathrm{SAN}}(\tau_i)
  \;+\;
  \underbrace{\frac{\widehat\mu_k(x) - \bar R_{\mathrm{global}}(x)}{\widehat\sigma_{\mathrm{global}}(x)+\varepsilon}}_{:=\Delta_k(x)}.    
\end{equation}
\end{proposition}

\textit{(Proof in Appendix~\ref{proof:exact-decomp}).} 
The decomposition in \Cref{lem:exact-decomp} reveals that the GN advantage equals a rescaled SAN advantage plus a \textbf{cross-stratum offset} ($\Delta_k$), which is the ultimate source of its systematic bias. This flaw carries over directly to the policy gradient.

\paragraph{Gradient Bias from the Cross-Stratum Bias.}
The decomposition in \Cref{agn} carries over directly to the gradient estimators. The GN gradient, $\widehat g_{\mathrm{GN}}$, decomposes into a SAN-like term and an additional bias-inducing term:
\begin{align}    
  \widehat g_{\mathrm{GN}}(x)
  =  \tfrac{1}{K}\sum_{k}\;\sum_{\tau_i \in B_k} \alpha_k\, A_{\mathrm{SAN}}(\tau_i)\,\nabla_\theta \log p_\theta(\tau_i \mid x) +\; \underbrace{\tfrac{1}{K}\sum_{k}\;\sum_{\tau_i \in B_k} \Delta_k\, \nabla_\theta \log p_\theta(\tau_i \mid x)}_{\text{Bias from Cross-Stratum Offset}}. \label{gradient_gn}
\end{align}

The decomposition in \Cref{gradient_gn} reveals a structural flaw in the GN gradient, driven by the cross-stratum offset $\Delta_k$. This term couples reward differences across strata with the policy's score vectors, introducing a systematic bias that persists whenever strata are heterogeneous. This fundamentally distorts the learning signal by forcing local credit assignment to depend on global statistics.

\paragraph{Cross-Stratum Bias Hurts Exploration.} Because $\Delta_k$ in \Cref{agn} has the sign of $(\widehat\mu_k(x) - \bar R_{\mathrm{global}})$, the GN estimator systematically downweights strata whose mean reward lies below the global mean. As a result, GN may under-sample such strata even when they contain unexplored high-reward modes. This structural bias can therefore hinder exploration, consistent with our empirical findings in \Cref{fig:plot_analysis} that GRPO cannot effectively explore policies that use more than one search call. Removing the offset is thus beneficial for promoting exploration.

\paragraph{Analysis as a Signal Carrier.}
To understand this more clearly, we analyze the conditional properties of the two estimators. An ideal ``signal carrier'' should be unbiased and have a consistent scale within each stratum, ensuring fair credit assignment. The following theorem formalizes why SAN provides a more reliable learning signal.

\begin{theorem}[Conditional Properties of SAN and GN Advantages]
\label{prop:purity_of_san_carrier}

Let $\varepsilon=0$, and let the population reward statistics for a given prompt $x$ and stratum $k$ be defined as:
\begin{align*}
    \mu_k(x) &:= \mathbb{E}[R(\tau)\mid k,x], & \sigma_k^2(x) &:= \operatorname{Var}(R(\tau)\mid k,x); \\
    \mu(x) &:= \mathbb{E}[R(\tau)\mid x], & \sigma^2(x) &:= \operatorname{Var}(R(\tau)\mid x).
\end{align*}
In the large-sample limit, the stratified (SAN) and global (GN) normalized advantages exhibit the following conditional properties for any stratum $k$:

1) \textbf{Conditional Expectation (Bias).} The SAN advantage is unbiased within each stratum, whereas the GN advantage carries a systematic bias proportional to the cross-stratum mean difference:
    \begin{align*}
        \mathbb{E}[A_{\mathrm{SAN}} \mid k, x] = 0, \qquad 
        \mathbb{E}[A_{\mathrm{GN}} \mid k, x] = \frac{\mu_k(x) - \mu(x)}{\sigma(x)}.
    \end{align*}

2) \textbf{Conditional Variance.} The SAN advantage provides a consistent unit variance, whereas the GN advantage's variance scales with the ratio of stratum-to-global variance:
    \begin{align*}
        \operatorname{Var}(A_{\mathrm{SAN}} \mid k, x) = 1, \qquad \operatorname{Var}(A_{\mathrm{GN}} \mid k, x) = \frac{\sigma_k^2(x)}{\sigma^2(x)}.
    \end{align*}

Consequently, SAN acts as a \emph{pure} and \emph{scale-stable} signal carrier, while GN introduces a cross-stratum bias and inconsistent scaling whenever reward heterogeneity is present.
\end{theorem}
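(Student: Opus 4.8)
The plan is to establish both parts of \Cref{prop:purity_of_san_carrier} by reducing the large-sample behavior of SAN and GN to direct computations with the population statistics, using the fact that the empirical mean and standard deviation within each stratum (and globally) converge to their population counterparts as the per-stratum sample sizes grow. Concretely, in the large-sample limit we replace $\widehat\mu_k(x) \to \mu_k(x)$, $\widehat\sigma_k(x) \to \sigma_k(x)$, $\bar R_{\mathrm{global}}(x) \to \mu(x)$, and $\widehat\sigma_{\mathrm{global}}(x) \to \sigma(x)$, so that with $\varepsilon=0$ the two estimators become, conditionally on $\tau$ lying in stratum $k$,
\[
A_{\mathrm{SAN}}(\tau) = \frac{R(\tau) - \mu_k(x)}{\sigma_k(x)},
\qquad
A_{\mathrm{GN}}(\tau) = \frac{R(\tau) - \mu(x)}{\sigma(x)}.
\]
All four claimed identities then follow from elementary properties of conditional expectation and variance applied to $R(\tau)$ given $k$ and $x$.

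For part (1), I would compute $\mathbb{E}[A_{\mathrm{SAN}} \mid k,x] = (\mathbb{E}[R(\tau)\mid k,x] - \mu_k(x))/\sigma_k(x) = 0$ by definition of $\mu_k(x)$, and $\mathbb{E}[A_{\mathrm{GN}} \mid k,x] = (\mathbb{E}[R(\tau)\mid k,x] - \mu(x))/\sigma(x) = (\mu_k(x) - \mu(x))/\sigma(x)$, which is exactly the stated cross-stratum offset. An equivalent route is to take conditional expectations directly in the decomposition of \Cref{lem:exact-decomp}: since $\alpha_k(x) \to \sigma_k(x)/\sigma(x)$ and $\Delta_k(x) \to (\mu_k(x)-\mu(x))/\sigma(x)$, and $\mathbb{E}[A_{\mathrm{SAN}}\mid k,x]=0$, the bias of $A_{\mathrm{GN}}$ reads off immediately as $\Delta_k(x)$. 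For part (2), since dividing by a constant (given $k,x$) scales variance by its square, $\operatorname{Var}(A_{\mathrm{SAN}}\mid k,x) = \operatorname{Var}(R(\tau)\mid k,x)/\sigma_k^2(x) = 1$, and $\operatorname{Var}(A_{\mathrm{GN}}\mid k,x) = \operatorname{Var}(R(\tau)\mid k,x)/\sigma^2(x) = \sigma_k^2(x)/\sigma^2(x)$. The final ``pure and scale-stable'' sentence is then just a qualitative restatement: the SAN conditional mean is zero and conditional variance is one regardless of $k$, whereas the GN conditional mean and variance depend on $k$ and vanish/equal one only in the degenerate homogeneous case $\mu_k(x)=\mu(x)$ and $\sigma_k(x)=\sigma(x)$ for all $k$.

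The main obstacle, and the only nontrivial point, is making the ``large-sample limit'' rigorous: one must argue that the empirical within-stratum statistics converge to the population ones, which requires that as the total batch size $K$ grows, every stratum receives an unboundedly large number of samples (so $n_k \to \infty$), and that $\sigma_k(x) > 0$ and $\sigma(x) > 0$ so the denominators are bounded away from zero and the ratios are well-defined. I would handle this by stating the convergence as the working assumption implicit in ``large-sample limit'' (consistent with how GRPO-style normalization is analyzed), invoking the law of large numbers for the i.i.d.\ draws within each stratum and globally, and the continuous mapping theorem to pass the limit through the ratio. With $\varepsilon = 0$ fixed as hypothesized, no additional care about the regularizer is needed; I would only remark that for $\varepsilon > 0$ the same computation gives the stated expressions with $\sigma_k(x)$ and $\sigma(x)$ replaced by $\sigma_k(x)+\varepsilon$ and $\sigma(x)+\varepsilon$, which recovers the exact identities as $\varepsilon \to 0$.
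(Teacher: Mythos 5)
Your proposal is correct and follows essentially the same route as the paper's proof: both treat the large-sample limit as replacing the empirical stratum and global statistics by their population counterparts, and then obtain all four identities by direct application of linearity of conditional expectation and the scaling rule $\operatorname{Var}(cX+d)=c^2\operatorname{Var}(X)$. Your additional remarks on justifying the limit via the law of large numbers and on reading off the GN bias from \Cref{lem:exact-decomp} are sound but go beyond what the paper's proof spells out; the core argument is the same.
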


\textit{(Proof in Appendix~\ref{proof:purity_of_san_carrier_compact}).} 
\Cref{prop:purity_of_san_carrier} shows SAN’s structural advantage: it enforces zero-mean, unit-variance signals within each stratum, ensuring fair credit assignment relative to peers.
GN, by contrast, introduces a spurious offset (nonzero conditional mean) and distorts scaling across strata.

Having established their conditional properties, we now analyze the global (marginal) moments of the two estimators.

\begin{theorem}[Global Moments of SAN and GN]\label{cor:global_moments}
Let $\varepsilon=0$. Fix a prompt $x$ and let $S\in\{0,\dots,I-1\}$ denote the stratum index with mixing weights $p_k(x)=\mathbb{P}(S=k\mid x)$. Write the population reward moments as in Theorem~\ref{prop:purity_of_san_carrier}:
\[
\mu_k(x)=\mathbb{E}[R\mid S{=}k,x], ~ \sigma_k^2(x)=\operatorname{Var}(R\mid S{=}k,x);\quad
\mu(x)=\mathbb{E}[R\mid x],~ \sigma^2(x)=\operatorname{Var}(R\mid x).
\]
Consider the large-sample (population) $\mathrm{SAN}$ and $\mathrm{GN}$ advantages:
\[
A_{\mathrm{SAN}}=\frac{R-\mu_{S}(x)}{\sigma_{S}(x)},
\qquad
A_{\mathrm{GN}}=\frac{R-\mu(x)}{\sigma(x)}.
\]
Then, (a) \textbf{Global Means:} $\mathbb{E}[A_{\mathrm{SAN}}\mid x]=0$, $\mathbb{E}[A_{\mathrm{GN}}\mid x]=0$.

\qquad ~ (b) \textbf{Global Variances:} \(
\operatorname{Var}(A_{\mathrm{SAN}}\mid x)=\operatorname{Var}(A_{\mathrm{GN}}\mid x)=1.
\)

\end{theorem}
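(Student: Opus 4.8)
The plan is to derive both identities from the conditional (within-stratum) moments already established in \Cref{prop:purity_of_san_carrier}, by averaging over the stratum index $S$ via the laws of total expectation and total variance. No new computation with the reward distribution is needed; everything reduces to the tower property applied to the preceding theorem.

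I would dispatch the GN case first, since it involves no stratification at all: $A_{\mathrm{GN}} = (R-\mu(x))/\sigma(x)$ is exactly the standardization of $R$ by its conditional-on-$x$ mean $\mu(x)$ and standard deviation $\sigma(x)$, so $\mathbb{E}[A_{\mathrm{GN}}\mid x] = (\mathbb{E}[R\mid x]-\mu(x))/\sigma(x)=0$ and $\operatorname{Var}(A_{\mathrm{GN}}\mid x)=\operatorname{Var}(R\mid x)/\sigma^2(x)=1$, directly from the definitions of $\mu(x)$ and $\sigma^2(x)$.

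For SAN, the mean in (a) follows from the tower property together with part 1 of \Cref{prop:purity_of_san_carrier}: since $\mathbb{E}[A_{\mathrm{SAN}}\mid S{=}k,x]=0$ for every $k$, we get $\mathbb{E}[A_{\mathrm{SAN}}\mid x]=\sum_k p_k(x)\,\mathbb{E}[A_{\mathrm{SAN}}\mid S{=}k,x]=0$. For the variance in (b) I would combine parts 1 and 2 of that theorem to get the conditional second moment, $\mathbb{E}[A_{\mathrm{SAN}}^2\mid S{=}k,x]=\operatorname{Var}(A_{\mathrm{SAN}}\mid S{=}k,x)+\big(\mathbb{E}[A_{\mathrm{SAN}}\mid S{=}k,x]\big)^2=1+0=1$, then average over strata: $\mathbb{E}[A_{\mathrm{SAN}}^2\mid x]=\sum_k p_k(x)\cdot 1=1$, so with the vanishing first moment $\operatorname{Var}(A_{\mathrm{SAN}}\mid x)=1$. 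Equivalently, this is the law of total variance: the ``within'' term $\mathbb{E}[\operatorname{Var}(A_{\mathrm{SAN}}\mid S,x)\mid x]=1$ and the ``between'' term $\operatorname{Var}\!\big(\mathbb{E}[A_{\mathrm{SAN}}\mid S,x]\mid x\big)=\operatorname{Var}(0\mid x)=0$ add to $1$.

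Since the whole argument is a one-line application of the tower property to results proved in the immediately preceding theorem, there is no substantive obstacle; the only care needed is bookkeeping — confirming that the population statistics used here are identically those of \Cref{prop:purity_of_san_carrier}, that every stratum has positive weight $p_k(x)$ so the conditioning is well-posed, and that $\sigma_k(x)>0$ and $\sigma(x)>0$ (the same regularity assumptions) so the normalizations are finite. The one conceptually noteworthy point to flag in the write-up is that SAN and GN wind up with identical global first and second moments despite differing conditionally: the law of total variance makes this transparent, as SAN trades the nonzero between-stratum component that GN carries for unit within-stratum variance in each stratum, leaving the total variance unchanged at $1$.
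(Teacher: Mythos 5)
Your proposal is correct and follows essentially the same route as the paper's proof: both dispatch GN by direct standardization and obtain the SAN moments by averaging the conditional (within-stratum) moments of \Cref{prop:purity_of_san_carrier} over $S$, with your second-moment computation being just a rephrasing of the paper's law-of-total-variance step (zero between-stratum term plus unit within-stratum term).
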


\textit{(Proof in Appendix~\ref{proof:global_moments}).} 
This theorem shows that despite their different mechanisms: SAN normalizing locally within strata and GN normalizing globally, both idealized estimators are \textbf{globally unbiased} (mean zero) and achieve the \textbf{exact same unit variance} in the idealized case where $\varepsilon=0$.

The comparison of moments in \Cref{prop:purity_of_san_carrier,cor:global_moments} is summarized in Appendix~\ref{moments_comp}. \Cref{prop:purity_of_san_carrier,cor:global_moments} reveal a crucial distinction between the global (overall) and conditional (within-stratum) properties of the idealized SAN and GN estimators. 
Globally, the two estimators appear equivalent: Theorem \ref{cor:global_moments} shows that both are unbiased (mean zero) and, in the ideal case where $\varepsilon=0$, have the exact same unit variance.
However, this global equivalence masks a fundamental structural difference at the conditional level, which is what truly governs the learning dynamics. Theorem \ref{prop:purity_of_san_carrier} establishes that SAN is conditionally pure: it delivers a zero-mean, unit-variance signal within every single stratum. In stark contrast, GN is conditionally biased and has inconsistent scaling, meaning its mean and variance fluctuate from one stratum to another.

\section{Blended Advantage for Finite-Sample Stability}
\label{sec:blended_advantage}

From \Cref{prop:purity_of_san_carrier}, SAN yields a \emph{pure, scale-stable} learning signal: it is conditionally unbiased within each stratum and has unit conditional variance. In contrast, GN reintroduces a cross-stratum offset and inconsistent scaling (\Cref{prop:purity_of_san_carrier}), but couples information across strata (\Cref{lem:exact-decomp}). In the finite sample regime, SAN may face a practical challenge when some strata contain very few trajectories, which may cause noisy advantage estimates. To address this, we employ a convex combination of SAN and GN that preserves SAN’s local purity while borrowing GN’s global signal to stabilize small strata.

\begin{definition}[Blended Advantage]
For $\tau\!\in\!B_k(x)$, define
\begin{equation}
\label{eq:blended_advantage}
A_{\mathrm{blend}}(\tau) \;=\; \alpha\,A_{\mathrm{SAN}}(\tau) \;+\; (1-\alpha)\,A_{\mathrm{GN}}(\tau),
\qquad \alpha \in [0,1].
\end{equation}
\end{definition}

The endpoints recover known estimators: $\alpha=1$ yields SAN, and $\alpha=0$ yields GN. Incorporating the blended advantage into SAN gives our practical method, \emph{Stratified GRPO} (\Cref{alg:stratified_grpo}).

\section{Experiments}
\label{sec:experiment}

We evaluate Stratified GRPO on seven diverse question-answering benchmarks, confirming that it significantly outperforms a wide range of strong baselines, particularly on multi-hop QA tasks. Further analysis reveals that compared to standard GRPO, our method also achieves superior training stability and learns a more effective multi-step search policy.

\subsection{Experimental Setup}
\label{sec:exp_setting}

\paragraph{Models and Training.}
We conduct experiments on the Qwen-2.5-3B Base and Instruct models \citep{yang2024qwen2}. For retrieval, we use the 2018 Wikipedia dump \citep{karpukhin2020dense} as the knowledge source and E5 \citep{wang2022text} as the retriever, fetching the top-3 passages per query. Following the setup in \citet{jin2025search}, we construct our training set by merging the training splits of Natural Questions (NQ) \citep{kwiatkowski2019natural} and HotpotQA \citep{yang2018hotpotqa}. We use Exact Match (EM) as the training reward.  Additional experiment details are in Appendix \ref{sec:experiment details}.

\paragraph{Evaluation Benchmarks.}
We evaluate performance on seven diverse question-answering datasets. These include three single-hop QA benchmarks: NQ \citep{kwiatkowski2019natural}, TriviaQA \citep{joshi2017triviaqa}, and PopQA \citep{mallen2022not}; and four multi-hop QA benchmarks: HotpotQA \citep{yang2018hotpotqa}, 2WikiMultiHopQA \citep{ho2020constructing}, MuSiQue \citep{trivedi2022musique}, and Bamboogle \citep{press2022measuring}. Consistent with standard practice \citep{yu2024rankrag,jin2025search}, EM is used as the evaluation metric.

\paragraph{Baselines.}
We compare Stratified GRPO against a comprehensive set of non-RL and RL methods. Non-RL methods include Direct Generation, Supervised Fine-Tuning (SFT), RAG \citep{lewis2020retrieval}, Search-o1 \citep{li2025search}, and IRCoT \citep{trivedi2022interleaving}. RL Methods includes Search-R1 \citep{jin2025search}, RL without search (R1) \citep{deepseekr1}, ReSearch \citep{chen2025research}, and GRPO \citep{Shao2024DeepSeekMath}. Most baseline results are cited from \cite{jin2025search} since their experiment setting is consistent with ours.

\subsection{Main Results}
\label{sec:main_results}

The experiment results, summarized in \Cref{tab:exp_main}, demonstrate that Stratified GRPO consistently outperforms all baseline methods across seven QA benchmarks. On average, our method improves upon GRPO by up to 11.3 points and surpasses the best-performing baseline by up to 8.3 points. The advantage is particularly pronounced on multi-hop benchmarks, where Stratified GRPO achieves an average performance gain of up to 14.5 points over the strongest baseline. We attribute this success to our method's ability to eliminate systematic cross-stratum bias, enabling more effective learning from trajectories with varying search counts. 

Notably, the consistent outperformance of Stratified GRPO over the PPO-based Search-R1 empirically suggests that the challenge of cross-stratum bias should also hold true for PPO. While our analysis focuses on the explicit bias in policy gradient's global baseline, this issue likely manifests in algorithms like PPO through the difficulty of training an accurate value function for structurally diverse trajectories. Our results, therefore, indicate that a principled handling of trajectory structure is a key factor in robustly training effective search agents.

\begin{table}[t]
    \centering
    \small
    \caption{
    Experiment results on seven QA benchmarks.
    \textbf{Bold} denotes best results.
    }
    \resizebox{\linewidth}{!}{
    \begin{tabular}{lcccccccc}
        \toprule
        & \multicolumn{3}{c}{Single-Hop QA} & \multicolumn{4}{c}{Multi-Hop QA}       & \multicolumn{1}{l}{} \\
        \cmidrule(lr){2-4} \cmidrule(lr){5-8}
        Methods            & NQ      & TriviaQA   & PopQA   & HotpotQA  & 2Wiki & Musique & Bamboogle & Avg. \\
        \midrule
        \multicolumn{9}{l}{\textit{Non-RL Baselines}} \\
        \quad Direct Generation     & 10.6   & 28.8      & 10.8   & 14.9    & 24.4 & 2.0   & 2.4     & 13.4 \\
        \quad SFT                  & 24.9   & 29.2      & 10.4   & 18.6    & 24.8 & 4.4   & 11.2     & 17.6 \\
        \quad  RAG                & 34.8   & 54.4      & 38.7   & 25.5    & 22.6 & 4.7   & 8.0     & 27.0 \\
        \quad Search-o1            & 23.8   & 47.2      & 26.2   & 22.1    & 21.8 & 5.4   & 32.0     & 25.5 \\
        \quad IRCoT  & 11.1   & 31.2      & 20.0   & 16.4    & 17.1 & 6.7   & 24.0     & 18.1 \\
        \midrule
        \multicolumn{9}{l}{\textit{Qwen2.5-3B-Base}} \\
        \quad Search-R1         & 40.6   & 58.7      & 43.5   & 28.4    & 27.3 & 4.9   & 8.8     & 30.3 \\
        \quad R1              & 22.6   & 45.5      & 17.3   & 20.1    & 26.8 & 5.5   & 22.4     & 22.9 \\
        \quad ReSearch            & 42.7          & 59.7          & 43.0 & 30.5 & 27.2 & 7.4 & 12.8 & 31.9 \\
        \quad  GRPO & 45.2 & 61.2 & \textbf{43.8} & 32.6 & 29.7 & 7.8& 12.9 & 33.3 \\
        \rowcolor[gray]{0.9}
        \quad Stratified GRPO                  & \textbf{45.9} & \textbf{61.4} & 43.0 & \textbf{40.8} & \textbf{39.9} & \textbf{17.7} & \textbf{42.7} & \textbf{41.6} \\
        \midrule
        \multicolumn{9}{l}{\textit{Qwen2.5-3B-Instruct}} \\
        \quad Search-R1       & 34.1   & 54.5      & 37.8   & 32.4    & 31.9 & 10.3   & 26.4     & 32.5 \\
        \quad R1           & 21.0   & 44.9      & 17.1   & 20.8    & 27.5 & 6.0   & 19.2     & 22.4 \\
        \quad ReSearch         & 36.5                & 57.1                & 39.5 & 35.1 & 27.2 & 9.5 & 26.6 & 33.1 \\
        \quad GRPO & 33.4 & 52.9 & 36.7 & 26.5 & 27.4 & 6.4& 21.0& 29.2 \\
        \rowcolor[gray]{0.9}
        \quad Stratified GRPO             & \textbf{44.5} & \textbf{60.9} & \textbf{44.3} & \textbf{41.0} & \textbf{37.3} & \textbf{16.9} & \textbf{38.7} & \textbf{40.5} \\
        \bottomrule
        \end{tabular}
    }
    \label{tab:exp_main}
\end{table}

\begin{table}[!t]
    \centering
    \small
    \caption{
    Ablation study analyzing the components of Stratified GRPO, comparing the baseline GRPO, GRPO with SAN, and our full Stratified GRPO.
    }
    \resizebox{\linewidth}{!}{
    \begin{tabular}{lcccccccc}
        \toprule
        & \multicolumn{3}{c}{Single-Hop QA} & \multicolumn{4}{c}{Multi-Hop QA}       & \multicolumn{1}{l}{} \\
        \cmidrule(lr){2-4} \cmidrule(lr){5-8}
        Model Variants            & NQ      & TriviaQA   & PopQA   & HotpotQA & 2wiki & Musique & Bamboogle & Avg.                 \\
        \midrule
        \multicolumn{9}{l}{\textit{Qwen2.5-3B-Base}} \\
        \quad  GRPO & 45.2 & 61.2 & \textbf{43.8} & 32.6 & 29.7 & 7.8& 12.9 & 33.3 \\
        \quad{w/} SAN & 43.7 & 59.3         & 41.1          & 36.6 & 38.4 & 12.6 & 25.0 & 36.7 \\
        \rowcolor[gray]{0.9}
        \quad Stratified GRPO                  & \textbf{45.9} & \textbf{61.4} & 43.0 & \textbf{40.8} & \textbf{39.9} & \textbf{17.7} & \textbf{42.7} & \textbf{41.6} \\
        \midrule
        \multicolumn{9}{l}{\textit{Qwen2.5-3B-Instruct}} \\
        \quad GRPO & 33.4 & 52.9 & 36.7 & 26.5 & 27.4 & 6.4& 21.0& 29.2 \\
        \quad{w/} SAN &42.5 & 60.1         & 44.2         & 39.4 & \textbf{41.0} & 16.0&36.3 & 39.9 \\
        \rowcolor[gray]{0.9}
        \quad Stratified GRPO             & \textbf{44.5} & \textbf{60.9} & \textbf{44.3} & \textbf{41.0} & 37.3 & \textbf{16.9} & \textbf{38.7} & \textbf{40.5} \\
        \bottomrule
        \end{tabular}
    }
    \label{tab:exp_ablation}
\end{table}

\subsection{Analysis}
\label{sec:analysis}

\begin{figure*}[t]
  \centering  
\includegraphics[width=\textwidth]{./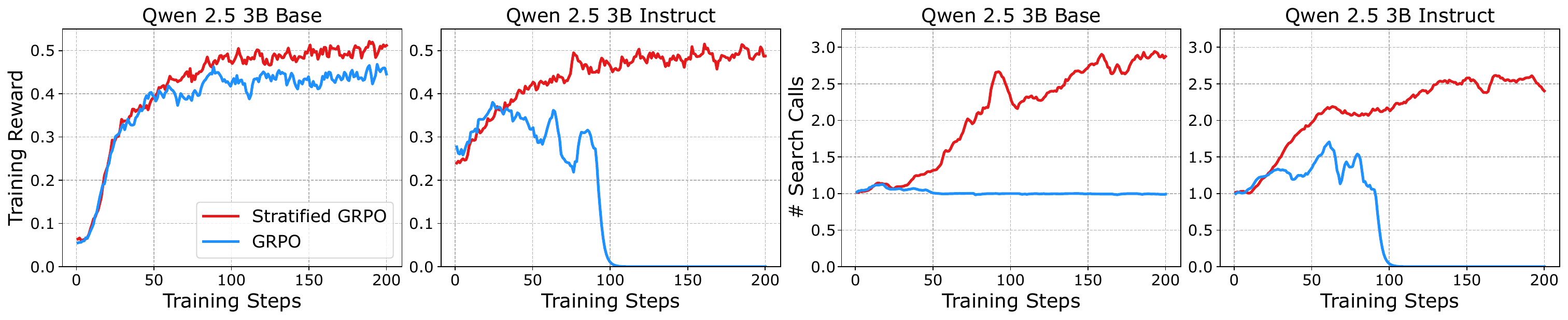}
  \vspace{-0.2cm}
  \caption{Training dynamics of Stratified GRPO and GRPO. The left plots show training rewards, and the right plots show the number of search calls per question over training steps.}
  \label{fig:plot_analysis}
\end{figure*}

In this section, we conduct an ablation study to isolate the contributions of Stratified GRPO's components. We also empirically analyze its training dynamics in comparison to GRPO.

\noindent \textbf{Ablation Studies.} We perform an ablation study to analyze the contribution of each component of our proposed method. As shown in \Cref{tab:exp_ablation}, each component provides a clear benefit. SAN alone yields significant gains over the baseline GRPO. The subsequent addition of advantage blending further enhances performance, establishing the effectiveness of the full Stratified GRPO algorithm, which consistently outperforms the other variants, especially on complex multi-hop QA tasks.

\noindent \textbf{Improved Reward and Training Stability.} \Cref{fig:plot_analysis} (left) illustrates the training reward curves. For the base model, Stratified GRPO consistently achieves higher rewards than the standard GRPO baseline. More notably, when applied to the instruct model, standard GRPO suffers from training collapse—a known instability issue also documented by prior works \citep{jin2025search}. In contrast, Stratified GRPO maintains a stable and monotonically increasing reward signal, demonstrating its superior stability and learning efficiency.

\noindent \textbf{Learning an Effective Search Policy.} A crucial ability for search agents is learning to identify knowledge gaps and issue search queries accordingly. We analyze this by tracking the average number of search calls per question during training (\Cref{fig:plot_analysis}, right). Stratified GRPO successfully learns a policy that converges to approximately 2.5 search calls, indicating it has learned to perform iterative searches. Conversely, the baseline GRPO stagnates at around one search call for the base model and causes training collapse for the instruct model. This is because GRPO's cross-stratum bias prevents it from exploring potentially better search policies. This result demonstrates that Stratified GRPO learns a more effective search policy, which directly translates to its superior performance on multi-hop benchmarks that require sequential information retrieval, as shown in \Cref{tab:exp_main}.

\section{Conclusion}

In this work, we identify and formalize cross-stratum bias, a key obstacle for training LLM search agents with RL. It arises from improperly comparing structurally heterogeneous trajectories using a global baseline. It causes distorted credit assignment and hampers exploration. To address this, we introduce Stratified GRPO, a principled algorithm that partitions trajectories into homogeneous strata and computes advantages locally. Our analysis proves this method eliminates cross-stratum bias and achieves conditional unbiasedness and unit variance within each stratum, while retaining these properties globally. Extensive experiments on diverse question-answering benchmarks demonstrate that Stratified GRPO substantially outperforms GRPO by up to 11.3 points, achieving higher rewards, greater training stability, and more effective search strategies. These results establish stratification as a principled remedy for structural heterogeneity in RL for LLM search agents.

\bibliographystyle{iclr2026_conference}
\bibliography{iclr2026_conference}

\appendix

\begin{algorithm}[h]
\caption{Stratified GRPO}
\label{alg:stratified_grpo}
\begin{algorithmic}[1]
\REQUIRE Policy $p_\theta$, batch $B=\{\tau_1,\dots,\tau_K\}$ with rewards $\{R_i\}$, blending $\alpha\!\in\![0,1]$, stabilizer $\varepsilon\!>\!0$.
\STATE Compute global stats: $\bar{R}_{\mathrm{global}}\!\leftarrow\!\tfrac{1}{K}\sum_i R_i$ 
\; $\widehat{\sigma}_{\mathrm{global}}\!\leftarrow\!\sqrt{\tfrac{1}{K}\sum_i (R_i-\bar{R}_{\mathrm{global}})^2}$.
\STATE For all $i$, set $A_{\mathrm{GN}}(\tau_i)\!\leftarrow\!(R_i-\bar{R}_{\mathrm{global}})/(\widehat{\sigma}_{\mathrm{global}}+\varepsilon)$.
\STATE Partition indices into per-prompt, per-stratum groups $I_k(x)$ (e.g., by search count).
\FOR{each prompt $x$}
  \FOR{each stratum $k$ with index set $I_k(x)$}
    \STATE $n_k\!\leftarrow\!|I_k(x)|$; \quad
    $\bar{R}_k\!\leftarrow\!\tfrac{1}{n_k}\!\sum_{i\in I_k(x)}\!R_i$; \quad
    $\widehat{\sigma}_k\!\leftarrow\!\sqrt{\tfrac{1}{n_k}\!\sum_{i\in I_k(x)} (R_i-\bar{R}_k)^2}$.
    \FOR{$i\in I_k(x)$}
      \STATE $A_{\mathrm{SAN}}(\tau_i)\!\leftarrow\!(R_i-\bar{R}_k)/(\widehat{\sigma}_k+\varepsilon)$.
      \STATE $A_{\mathrm{blend}}(\tau_i)\!\leftarrow\!\alpha\,A_{\mathrm{SAN}}(\tau_i) + (1-\alpha)\,A_{\mathrm{GN}}(\tau_i)$.
    \ENDFOR
  \ENDFOR
\ENDFOR
\STATE Return gradient estimate $\widehat{g}_{\mathrm{blend}}
\leftarrow \tfrac{1}{K}\sum_{i=1}^K A_{\mathrm{blend}}(\tau_i)\,\nabla_\theta\log p_\theta(\tau_i\mid x)$.
\end{algorithmic}
\end{algorithm}

\section{Related Work}
\label{related_work}

\subsection{Reinforcement Learning for Large Language Models}

Reinforcement learning (RL) \citep{kaelbling1996reinforcement,sutton1999reinforcement} has become a central component in post-training large language models (LLMs). The most widely adopted paradigm is Reinforcement Learning from Human Feedback (RLHF), which learns a reward model from human preferences and then optimizes a policy with RL algorithms, most often Proximal Policy Optimization (PPO) \citep{ouyang2022training,schulman2017proximalpolicyoptimizationalgorithms}. Although effective, RLHF can be computationally expensive and brittle due to reward model training and distribution shift. To reduce the cost and instability of explicit reward modeling, direct alignment methods like DPO optimize preference data without training a separate reward model \citep{dpo,zhu2025tgdpo}. A complementary line of work targets reasoning by exploiting verifiable outcomes using Reinforcement Learning with Verifiable Rewards (RLVR) \citep{deepseekr1,Shao2024DeepSeekMath,ahmadian2024back,yu2025dapo}. A prominent approach is Group Relative Policy Optimization (GRPO) \citep{Shao2024DeepSeekMath}, which removes PPO's dependency on a learned value function by using group-based baselines, and RLOO \citep{ahmadian2024back} revisits REINFORCE \citep{Williams1992} with simplifications tailored to LLM training. However, most of these advances target general conversational capabilities. By contrast, the systematic study of RL algorithms for LLM agents, especially search agents that interleave generation with search over external information and require long-horizon reasoning, remains underexplored. Our work addresses this gap by providing a systematic analysis of the LLM search agent setting and proposing RL algorithms tailored to its unique challenges.

\subsection{Large Language Model Search Agents}

Large language models (LLMs) \citep{achiam2023gpt,team2024gemini,zhao2023survey} exhibit strong reasoning capabilities \citep{deepseekr1}. Building on this capacity, recent work has developed agentic workflows that equip LLMs with external tools for complex problem solving \citep{Schick2023Toolformer}. A prominent instantiation is the LLM search agent, which treats a search engine as a callable tool at inference time \citep{Schick2023Toolformer}: the model iteratively proposes queries, retrieves evidence, and updates its reasoning based on retrieved documents \citep{trivedi2022interleaving}. Two main approaches have emerged. One line of work uses carefully designed prompts to instruct LLMs to interleave reasoning and retrieval \citep{trivedi2022interleaving,yao2023react}. Another line curates trajectories that mix reasoning with search and then applies supervised fine-tuning \citep{Schick2023Toolformer,asai2023self}. More recently, several studies \citep{chen2025research,jin2025search,song2025r1,zheng2025deepresearcher} show that complex search-and-reasoning behaviors can be acquired directly from outcome-based rewards using RL algorithms such as PPO or GRPO. However, these RL applications typically adopt general-purpose algorithms without addressing the specific intricacies of the search agent setting. Our work identifies and formalizes cross-stratum bias as a fundamental challenge for LLM search agents, arising from the heterogeneous nature of agent trajectories that use different search strategies. To overcome this, we propose Stratified GRPO, a principled algorithm designed to eliminate this bias and improve learning more effective search policies for LLM search agents.

\section{Proofs of Theoretical Results}

\subsection{Proof of \Cref{thm:variance_reduction}}
\label{proof:variance_reduction}

\begin{proof}
By construction, both advantage estimators are centered, meaning their sample mean over the batch $B$ is zero:
\begin{equation}
\label{zero_mean}
\frac{1}{K}\sum_{i=1}^K \hat{A}_G(\tau_i) = 0,
\qquad
\frac{1}{K}\sum_{i=1}^K \hat{A}_S(\tau_i) = \frac{1}{K}\sum_{k=0}^{I-1} \sum_{\tau_i \in B_k} (R_i - \bar{R}_k) = 0.
\end{equation}
The variance of the global advantage estimator is the total sample variance of the rewards:
\[
\operatorname{Var}[\hat{A}_G] = \frac{1}{K}\sum_{i=1}^K (R_i - \bar{R}_{\mathrm{global}})^2.
\]
We decompose this total variance by partitioning the sum by stratum and adding and subtracting the stratum mean $\bar{R}_k$ within the squared term:
\begin{align*}
\operatorname{Var}[\hat{A}_G]
&= \frac{1}{K}\sum_{k=0}^{I-1} \sum_{\tau_i \in B_k}
       \bigl( (R_i - \bar{R}_k) + (\bar{R}_k - \bar{R}_{\mathrm{global}}) \bigr)^2 \\
&= \frac{1}{K}\sum_{k=0}^{I-1} \sum_{\tau_i \in B_k} (R_i - \bar{R}_k)^2 \\
& \qquad + \frac{1}{K}\sum_{k=0}^{I-1} \sum_{\tau_i \in B_k} 2(R_i - \bar{R}_k)(\bar{R}_k - \bar{R}_{\mathrm{global}}) \\
& \qquad + \frac{1}{K}\sum_{k=0}^{I-1} \sum_{\tau_i \in B_k} (\bar{R}_k - \bar{R}_{\mathrm{global}})^2.
\end{align*}
The cross-term in the above equation vanishes because, for each stratum $k$, the inner sum $\sum_{\tau_i \in B_k}(R_i-\bar{R}_k)=0$ by the definition of $\bar{R}_k$. The above expression thus simplifies to the well-known variance decomposition formula:
\begin{equation}
\label{anova}    
\operatorname{Var}[\hat{A}_G] = \underbrace{\frac{1}{K}\sum_{k=0}^{I-1} \sum_{\tau_i \in B_k}(R_i-\bar{R}_k)^2}_{\text{Within-Stratum Variance}}
   + \underbrace{\frac{1}{K}\sum_{k=0}^{I-1} n_k(\bar{R}_k-\bar{R}_{\mathrm{global}})^2}_{\text{Between-Stratum Variance}}.
\end{equation}
The first term in \Cref{anova}  is precisely the definition of $\Var[\hat{A}_S]$, since 
\[
\operatorname{Var}[\hat{A}_S] = \frac{1}{K} \sum_{i=1}^K \bigl(\hat{A}_S(\tau_i) - \bar{\hat{A}}_S\bigr)^2,
\]
where $\bar{\hat{A}}_S$ is the sample mean of the stratified advantages over the entire batch. As established in \Cref{zero_mean}, this mean is zero ($\bar{\hat{A}}_S = 0$). Therefore, the variance simplifies to the mean squared value:
\[
\operatorname{Var}[\hat{A}_S] = \frac{1}{K} \sum_{i=1}^K \bigl(\hat{A}_S(\tau_i)\bigr)^2.
\]
To evaluate this sum, we partition it according to the strata $\{B_k\}_{k=0}^{I-1}$:
\[
\operatorname{Var}[\hat{A}_S] = \frac{1}{K} \sum_{k=0}^{I-1} \sum_{\tau_i \in B_k} \bigl(\hat{A}_S(\tau_i)\bigr)^2.
\]
Finally, we substitute the definition of the stratified advantage, $\hat{A}_S(\tau_i) = R_i - \bar{R}_k$, for trajectories within each stratum $B_k$:
\[
\operatorname{Var}[\hat{A}_S] = \frac{1}{K} \sum_{k=0}^{I-1} \sum_{\tau_i \in B_k} (R_i - \bar{R}_k)^2.
\]
This expression is identical to the first term in the variance decomposition identity (\Cref{anova}), thus proving the assertion.

The second term in \Cref{anova}, the between-stratum variance, is a weighted sum of squares and is therefore non-negative. This yields the inequality:
\begin{equation}
\label{proof:decomposition} 
\operatorname{Var}[\hat{A}_G] = \operatorname{Var}[\hat{A}_S]
+ \frac{1}{K}\sum_{k=0}^{I-1} n_k(\bar{R}_k-\bar{R}_{\mathrm{global}})^2
\ge \operatorname{Var}[\hat{A}_S].   
\end{equation}
Equality holds if and only if the between-stratum variance component is zero. This requires every term in the sum to be zero, which means $\bar{R}_k=\bar{R}_{\mathrm{global}}$ for all $k$. This condition is met only when all stratum means coincide. Otherwise, the between-stratum variance is strictly positive, and the inequality is strict.
\end{proof}

\begin{remark}
\Cref{anova} is the finite-sample form of the classical 
\emph{within–between variance decomposition} in ANOVA. 
It shows that replacing the global baseline with stratum-specific baselines 
always weakly reduces the variance of advantage estimates, and strictly reduces it whenever the strata have heterogeneous reward means. 
\end{remark}

\subsection{Proof of \Cref{prop:san_invariance}}
\label{proof:invariance}

\begin{proof}
Let a stratum for a given prompt $x$ be denoted by $B_k(x)$, containing $n_k(x)$ trajectories with rewards $\{R_1, \dots, R_{n_k(x)}\}$. The empirical mean and standard deviation are $$\widehat{\mu}_k(x) = \frac{1}{n_k(x)}\sum_{i=1}^{n_k(x)} R_i$$ 
and 
$$\widehat{\sigma}_k(x) = \sqrt{\frac{1}{n_k(x)}\sum_{i=1}^{n_k(x)} (R_i - \widehat{\mu}_k(x))^2}.$$ 
Without loss of generality, suppose that $n_k\ge 2$, and $\widehat{\sigma}_k(x)>0$. 

Consider the affine transformation $R'_i = a R_i + b$ for $a > 0$. We first compute the new empirical mean $\widehat{\mu}'_k(x)$ and standard deviation $\widehat{\sigma}'_k(x)$ for the transformed rewards.

The new mean is:
\[
\widehat{\mu}'_k(x) = \frac{1}{n_k(x)}\sum_{i=1}^{n_k(x)} R'_i = \frac{1}{n_k(x)}\sum_{i=1}^{n_k(x)} (aR_i + b) = a\left(\frac{1}{n_k(x)}\sum_{i=1}^{n_k(x)} R_i\right) + b = a\widehat{\mu}_k(x) + b.
\]
The new variance is:
\begin{align*}
(\widehat{\sigma}'_k(x))^2 &= \frac{1}{n_k(x)}\sum_{i=1}^{n_k(x)} (R'_i - \widehat{\mu}'_k(x))^2 \\
&= \frac{1}{n_k(x)}\sum_{i=1}^{n_k(x)} \bigl( (aR_i + b) - (a\widehat{\mu}_k(x) + b) \bigr)^2 \\
&= \frac{1}{n_k(x)}\sum_{i=1}^{n_k(x)} \bigl( a(R_i - \widehat{\mu}_k(x)) \bigr)^2 \\
&= a^2 \left( \frac{1}{n_k(x)}\sum_{i=1}^{n_k(x)} (R_i - \widehat{\mu}_k(x))^2 \right) = a^2 (\widehat{\sigma}_k(x))^2.
\end{align*}
Since $a > 0$, the new standard deviation is $\widehat{\sigma}'_k(x) = \sqrt{a^2 (\widehat{\sigma}_k(x))^2} = a\widehat{\sigma}_k(x)$.

Finally, we compute the new advantage $A'_{\text{SAN}}$ for an arbitrary trajectory $\tau_i \in B_k(x)$:
\begin{align*}
A'_{\text{SAN}}(\tau_i) &= \frac{R'_i - \widehat{\mu}'_k(x)}{\widehat{\sigma}'_k(x)} \\
&= \frac{(aR_i + b) - (a\widehat{\mu}_k(x) + b)}{a\widehat{\sigma}_k(x)} \\
&= \frac{a(R_i - \widehat{\mu}_k(x))}{a\widehat{\sigma}_k(x)} \\
&= \frac{R_i - \widehat{\mu}_k(x)}{\widehat{\sigma}_k(x)} = A_{\text{SAN}}(\tau_i).
\end{align*}
This shows that the advantage computed from the transformed rewards is identical to the original, completing the proof.
\end{proof}

\subsection{Proof of  \Cref{thm:var_stratified_norm}}
\label{proof:var_stratified_norm}

\begin{proof}
We prove the identity by introducing the variance of the intermediate stratified estimator, $\operatorname{Var}[\hat{A}_S]$, and decomposing the total difference into two parts. The total difference can be written as a telescoping sum:
\begin{align}
\label{hard0}
\operatorname{Var}[\hat{A}_G] - \operatorname{Var}[A_{\rm SAN}] = \big(\operatorname{Var}[\hat{A}_G] - \operatorname{Var}[\hat{A}_S]\big) + \big(\operatorname{Var}[\hat{A}_S] - \operatorname{Var}[A_{\rm SAN}]\big).
\end{align}
By \Cref{thm:variance_reduction},
\[
\operatorname{Var}[\hat{A}_G] - \operatorname{Var}[\hat{A}_S] = \frac{1}{K}\sum_{k=0}^{I-1} n_k (\bar{R}_k - \bar{R}_{\rm global})^2.
\]
Moreover, by direct expansion, the empirical variance of the stratified advantage is $$\operatorname{Var}[\hat{A}_S] = \frac{1}{K}\sum_{k=0}^{I-1} n_k \sigma_k^2, \qquad \operatorname{Var}[A_{\rm SAN}] = \frac{1}{K}\sum_{k=0}^{I-1} n_k \frac{\sigma_k^2}{(\sigma_k + \varepsilon)^2}.$$ 
Then we can compute their difference:
\begin{align*}
\operatorname{Var}[\hat{A}_S] - \operatorname{Var}[A_{\rm SAN}] = \frac{1}{K}\sum_{k=0}^{I-1} n_k \sigma_k^2 - \frac{1}{K}\sum_{k=0}^{I-1} n_k \frac{\sigma_k^2}{(\sigma_k + \varepsilon)^2} 
= \frac{1}{K}\sum_{k=0}^{I-1} n_k \sigma_k^2 \left(1 - \frac{1}{(\sigma_k + \varepsilon)^2}\right).
\end{align*}
Substituting the two expressions into \Cref{hard0} can get the final result.
\end{proof}

\subsection{Proof of \Cref{thm:san_bias_bound}}
\label{proof:bias_bound}

Proving \Cref{thm:san_bias_bound} relies on the following two lemmas:

\begin{lemma}[Conditional expectation of the score function]
\label{lem:conditional_score}
Let $B\subseteq\{\tau\}$ be a fixed stratum and fix the context $x$.
Assume $p_\theta(\tau\mid x)$ is differentiable in $\theta$, and that differentiation
may be interchanged with summation/integration. 
Define 
\begin{equation}
Z(\theta;x)\coloneqq \operatorname{Pr}_{\theta}(\tau\in B\mid x)
= \sum_{\tau\in B} p_\theta(\tau\mid x) 
\quad \text{(or the corresponding integral in the continuous case)}.    
\end{equation}
If $Z(\theta;x)>0$, then
\[
\mathbb{E}\!\left[\nabla_\theta \log p_\theta(\tau\mid x)\;\middle|\;\tau\in B, x\right]
= \nabla_\theta \log \operatorname{Pr}_\theta(\tau\in B\mid x).
\]
In particular, this conditional expectation equals zero if and only if
$\Pr_\theta(\tau\in B\mid x)$ is constant in $\theta$.
\end{lemma}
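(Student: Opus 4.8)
The plan is to compute the conditional expectation directly from its definition, using the conditional density $p_\theta(\tau \mid \tau \in B, x) = p_\theta(\tau \mid x) / Z(\theta;x)$, and then recognize the resulting expression as a logarithmic derivative. First I would write, for the discrete case,
\[
\mathbb{E}\!\left[\nabla_\theta \log p_\theta(\tau\mid x)\;\middle|\;\tau\in B, x\right]
= \sum_{\tau\in B} \frac{p_\theta(\tau\mid x)}{Z(\theta;x)}\,\nabla_\theta \log p_\theta(\tau\mid x).
\]
Then I would apply the score-function identity $p_\theta(\tau\mid x)\,\nabla_\theta \log p_\theta(\tau\mid x) = \nabla_\theta p_\theta(\tau\mid x)$ to each summand, giving $\frac{1}{Z(\theta;x)}\sum_{\tau\in B}\nabla_\theta p_\theta(\tau\mid x)$. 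Interchanging the (finite or, under the stated hypothesis, legitimately interchangeable) sum with the gradient yields $\frac{1}{Z(\theta;x)}\nabla_\theta Z(\theta;x) = \nabla_\theta \log Z(\theta;x)$, which is exactly $\nabla_\theta \log \operatorname{Pr}_\theta(\tau\in B\mid x)$ since $Z(\theta;x) = \operatorname{Pr}_\theta(\tau\in B\mid x)$.

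For the final ``in particular'' clause, I would argue both directions: if $\operatorname{Pr}_\theta(\tau\in B\mid x)$ is constant in $\theta$ then its gradient, and hence the conditional expectation, is zero; conversely, if the conditional expectation vanishes identically (as a function of $\theta$ on the region where $Z>0$), then $\nabla_\theta \log Z(\theta;x) = 0$ there, so $\log Z$ is locally constant, hence $Z$ is constant (on each connected component of the parameter region). I would state this under the implicit assumption that the relevant parameter set is connected, or phrase the conclusion as ``$\operatorname{Pr}_\theta(\tau\in B\mid x)$ does not depend on $\theta$'' on the domain of interest.

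The continuous case is handled identically with sums replaced by integrals over $B$, invoking the stated assumption that differentiation may be interchanged with integration; I would simply remark that the argument is verbatim the same. The main obstacle — really the only subtle point — is justifying the interchange of $\nabla_\theta$ with the summation/integration defining $Z$, but this is granted by hypothesis, so the proof is essentially a two-line computation plus the bookkeeping for the iff statement. I would keep it short and not belabor the regularity conditions beyond citing the hypothesis.
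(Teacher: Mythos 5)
Your proposal is correct and follows essentially the same route as the paper's proof: write the conditional expectation via the conditional density $p_\theta(\tau\mid x)/Z(\theta;x)$, apply $p\,\nabla_\theta\log p=\nabla_\theta p$, and interchange the gradient with the sum to obtain $\nabla_\theta\log Z(\theta;x)$. Your extra care on the converse direction of the ``in particular'' clause (local constancy of $\log Z$ and the connectedness caveat) is a slight refinement over the paper, which simply states that the claim follows immediately.
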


\begin{proof}
By definition, the conditional distribution restricted to $B$ is
\[
p_\theta(\tau \mid x, \tau \in B) = \frac{p_\theta(\tau \mid x)}{Z(\theta;x)}, \quad \tau \in B.
\]
Hence, using $p\nabla\log p = \nabla p$ and assuming we can exchange differentiation
and summation/integration,
\[
\begin{aligned}
\mathbb{E}\!\left[\nabla_\theta \log p_\theta(\tau \mid x) \;\middle|\; \tau \in B, x \right]
&= \sum_{\tau \in B} p_\theta(\tau \mid x, \tau \in B) \, \nabla_\theta \log p_\theta(\tau \mid x) \\
&= \frac{1}{Z(\theta;x)} \sum_{\tau \in B} p_\theta(\tau \mid x) \, \nabla_\theta \log p_\theta(\tau \mid x) \\
&= \frac{1}{Z(\theta;x)} \sum_{\tau \in B} \nabla_\theta p_\theta(\tau \mid x) \\
&= \frac{\nabla_\theta Z(\theta;x)}{Z(\theta;x)} \\
&= \nabla_\theta \log Z(\theta;x),
\end{aligned}
\]
where the sum can be replaced by an integral if $\tau$ is continuous. This proves the stated identity. 
The final claim about vanishing follows immediately.
\end{proof}

\begin{remark}
The identity above does \emph{not} imply the conditional expectation is zero 
unless $\Pr_\theta(\tau \in B \mid x)$ is independent of $\theta$. 
If the stratum $B$ depends on $\theta$ (i.e., $B = B(\theta)$), differentiating 
$\Pr_\theta(\tau \in B(\theta) \mid x)$ introduces additional terms due to the moving 
support; the above proof assumes $B$ is fixed.
\end{remark}

\begin{lemma}[Conditional Advantage–Score Identity]
\label{lem:adv_score_identity}
For any stratum $k$, the following identity holds:
\begin{equation}
  \mathbb{E}_\theta\big[(R(\tau)-\mu_k(\theta))\,\nabla_\theta \log p_\theta(\tau\mid S=k)\;\big|\; S=k\big]
  = \nabla_\theta \mu_k(\theta),
\end{equation}
where $\mu_k(\theta) \coloneqq \mathbb{E}_\theta[R(\tau)\mid S=k]$ is the mean reward in stratum $k$.
\end{lemma}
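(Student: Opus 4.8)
The plan is to start from the definition $\mu_k(\theta) = \mathbb{E}_\theta[R(\tau)\mid S=k] = \sum_\tau R(\tau)\, p_\theta(\tau\mid S=k)$ and apply the score-function (likelihood-ratio) identity, exactly as in the derivation already displayed after the statement of \Cref{thm:san_bias_bound}. First I would differentiate under the summation sign (justified by the standard regularity conditions), using $\nabla_\theta p_\theta(\tau\mid S=k) = p_\theta(\tau\mid S=k)\,\nabla_\theta \log p_\theta(\tau\mid S=k)$, to obtain
\[
\nabla_\theta \mu_k(\theta) = \mathbb{E}_\theta\big[R(\tau)\,\nabla_\theta \log p_\theta(\tau\mid S=k)\;\big|\; S=k\big].
\]
Next I would subtract the constant $\mu_k(\theta)$ times the conditional score and invoke \Cref{lem:conditional_score} — or rather its specialization with $B$ the event $\{S=k\}$ treated as fixed — which gives $\mathbb{E}_\theta[\nabla_\theta \log p_\theta(\tau\mid S=k)\mid S=k] = 0$ whenever we condition on a fixed stratum. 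Hence subtracting $\mu_k(\theta)\,\mathbb{E}_\theta[\nabla_\theta\log p_\theta(\tau\mid S=k)\mid S=k] = 0$ leaves the left-hand side unchanged, yielding
\[
\mathbb{E}_\theta\big[(R(\tau)-\mu_k(\theta))\,\nabla_\theta \log p_\theta(\tau\mid S=k)\;\big|\; S=k\big] = \nabla_\theta \mu_k(\theta),
\]
which is the claimed identity.

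The one subtlety worth flagging is that \Cref{lem:conditional_score} as stated gives $\mathbb{E}[\nabla_\theta\log p_\theta(\tau\mid x)\mid \tau\in B,x] = \nabla_\theta\log\Pr_\theta(\tau\in B\mid x)$, which vanishes only when $\Pr_\theta(\tau\in B)$ is $\theta$-independent; and the remark there warns that a $\theta$-dependent stratum introduces moving-support terms. For this lemma, however, the score appearing is $\nabla_\theta\log p_\theta(\tau\mid S=k)$ — the gradient of the \emph{conditional} density, not of the joint density — so the relevant fact is simply that any normalized conditional distribution has mean-zero score in $\theta$: $\mathbb{E}_\theta[\nabla_\theta\log p_\theta(\tau\mid S=k)\mid S=k] = \sum_\tau \nabla_\theta p_\theta(\tau\mid S=k) = \nabla_\theta 1 = 0$. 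This is exactly the computation embedded in \Cref{lem:conditional_score} with $Z\equiv 1$, and it holds even when the strata are $\theta$-dependent, because here $k$ is held fixed and we work entirely inside the conditional law. I expect this bookkeeping — being careful that we differentiate the conditional density and that the normalization is identically one — to be the main (minor) obstacle; the rest is the routine score-function manipulation already sketched in the main text.
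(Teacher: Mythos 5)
Your proposal is correct and follows essentially the same route as the paper: the appendix proof of this lemma simply cites \citet{Williams1992} and \citet{Sutton2018}, but the main text (immediately after \Cref{thm:san_bias_bound}) displays exactly your derivation — differentiate $\mu_k(\theta)=\sum_\tau R(\tau)p_\theta(\tau\mid S=k)$ under the sum, apply the log-derivative trick, and subtract the baseline using the zero-mean property of the conditional score. Your added care in distinguishing the score of the conditional density (which is always mean-zero since $\sum_\tau p_\theta(\tau\mid S=k)\equiv 1$) from the joint score treated in \Cref{lem:conditional_score} is a correct and slightly more explicit justification than the paper provides.
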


\begin{proof}
This identity is a standard result from policy gradient theory, derived by applying the log-derivative trick to the gradient of an expectation and introducing the stratum mean $\mu_k(\theta)$ as a variance-reducing baseline. For detailed derivations, see the original REINFORCE paper by \citet{Williams1992} and the textbook treatment  in \citet{Sutton2018}.
\end{proof}

Next, we give a proof of \Cref{thm:san_bias_bound}:

\begin{proof}
We prove the identity by first applying the law of total expectation to decompose the total expectation over the strata $k$:
\begin{align}
    \mathbb{E}_\theta\big[A_{\mathrm{SAN}}(\tau) \,\nabla_\theta \log p_\theta(\tau)\big]
    &= \sum_{k} \mathbb{E}_\theta\big[A_{\mathrm{SAN}}(\tau) \,\nabla_\theta \log p_\theta(\tau) \,\mathbf{1}_{S=k}\big]  \notag \\
    &= \sum_{k} p_k(\theta) \,\mathbb{E}_\theta\Big[ A_{\mathrm{SAN}}(\tau)\,\nabla_\theta \log p_\theta(\tau) \,\big|\, S=k \Big]. \label{expect}
\end{align}
Now, we analyze the conditional expectation for a single stratum $k$. The key is to decompose the score function using the chain rule of probability, $p_\theta(\tau) = p_\theta(\tau \mid S=k) \, p_k(\theta)$. Taking the log-gradient gives the identity:
\[
\nabla_\theta \log p_\theta(\tau) = \nabla_\theta \log p_\theta(\tau \mid S=k) + \nabla_\theta \log p_k(\theta).
\]
Substituting this into the conditional expectation allows us to split it into two terms by linearity:
\begin{align*}
    &\mathbb{E}_\theta\Big[ A_{\mathrm{SAN}}(\tau)\,\nabla_\theta \log p_\theta(\tau) \,\big|\, S=k \Big] \\
    &\qquad= \underbrace{\mathbb{E}_\theta\Big[ A_{\mathrm{SAN}}(\tau)\,\nabla_\theta \log p_\theta(\tau \mid S=k) \,\big|\, S=k \Big]}_{\text{Term 1: Conditional Score Part}} 
    + \underbrace{\mathbb{E}_\theta\Big[ A_{\mathrm{SAN}}(\tau)\,\nabla_\theta \log p_k(\theta) \,\big|\, S=k \Big]}_{\text{Term 2: Marginal Score Part}}.
\end{align*}
We analyze each term separately.

\paragraph{Term 2 (Marginal Score Part).}
The term $\nabla_\theta \log p_k(\theta)$ depends only on the stratum index $k$ and the parameter $\theta$, so it is a constant with respect to the expectation conditional on $S=k$. We can therefore factor it out:
\begin{align*}
    \mathbb{E}_\theta\Big[ A_{\mathrm{SAN}}(\tau)\,\nabla_\theta \log p_k(\theta) \,\big|\, S=k \Big] &= \left(\nabla_\theta \log p_k(\theta)\right) \cdot \mathbb{E}_\theta[A_{\mathrm{SAN}}(\tau) \mid S=k] \\
    &= \left(\nabla_\theta \log p_k(\theta)\right) \cdot \frac{\mathbb{E}_\theta[R(\tau) - \mu_k(\theta) \mid S=k]}{\sigma_k(\theta)+\varepsilon} \\
    &= \left(\nabla_\theta \log p_k(\theta)\right) \cdot \frac{\mu_k(\theta) - \mu_k(\theta)}{\sigma_k(\theta)+\varepsilon} = 0.
\end{align*}
Thus, the second term vanishes exactly. This is the crucial step where the structural part of the gradient is eliminated.

\paragraph{Term 1 (Conditional Score Part).}
For this term, we substitute the definition of $A_{\mathrm{SAN}}(\tau)$, noting that conditional on $S=k$, $\mu_S(\theta)$ becomes the constant $\mu_k(\theta)$:
\begin{align*}
    &\mathbb{E}_\theta\Big[ A_{\mathrm{SAN}}(\tau)\,\nabla_\theta \log p_\theta(\tau \mid S=k) \,\big|\, S=k \Big] \\
    &\qquad= \mathbb{E}_\theta\left[ \frac{R(\tau) - \mu_k(\theta)}{\sigma_k(\theta)+\varepsilon}\, \nabla_\theta \log p_\theta(\tau \mid S=k) \,\middle|\, S=k \right] \\
    &\qquad= \frac{1}{\sigma_k(\theta)+\varepsilon} \,\mathbb{E}_\theta\Big[ (R(\tau) - \mu_k(\theta))\, \nabla_\theta \log p_\theta(\tau \mid S=k) \,\big|\, S=k \Big].
\end{align*}
The remaining expectation is precisely the Conditional Advantage--Score Identity (\Cref{lem:adv_score_identity}), which equals $\nabla_\theta \mu_k(\theta)$. Therefore, the first term equals:
\[
\frac{1}{\sigma_k(\theta)+\varepsilon} \,\nabla_\theta \mu_k(\theta).
\]

\paragraph{Conclusion.}
Substituting the results for Term 1 and Term 2 back into the original sum over strata, we obtain the final result:
\[
\mathbb{E}_\theta\big[A_{\mathrm{SAN}}(\tau)\,\nabla_\theta \log p_\theta(\tau)\big] 
= \sum_k p_k(\theta) \left( \frac{1}{\sigma_k(\theta)+\varepsilon} \, \nabla_\theta \mu_k(\theta) + 0 \right)
= \sum_k \frac{p_k(\theta)}{\sigma_k(\theta)+\varepsilon} \, \nabla_\theta \mu_k(\theta).
\]
This completes the proof.
\end{proof}

\subsection{Proof of \Cref{lem:exact-decomp}}
\label{proof:exact-decomp}

\begin{proof}
The proof follows from direct algebraic manipulation. We start from the right-hand side (RHS) and substitute the definitions of $\alpha_k(x)$, $A_{\mathrm{SAN}}(\tau_i)$, and $\Delta_k(x)$:
\begin{align*}
  \text{RHS} &= \alpha_k(x) \cdot A_{\mathrm{SAN}}(\tau_i) + \Delta_k(x) \\
  &= \left( \frac{\widehat\sigma_k(x)+\varepsilon}{\widehat\sigma_{\mathrm{global}}(x)+\varepsilon} \right) \cdot \left( \frac{R(\tau_i) - \widehat\mu_k(x)}{\widehat\sigma_k(x)+\varepsilon} \right) + \left( \frac{\widehat\mu_k(x) - \bar R_{\mathrm{global}}(x)}{\widehat\sigma_{\mathrm{global}}(x)+\varepsilon} \right) && \text{(by definition)} \\[6pt]
  &= \frac{R(\tau_i) - \widehat\mu_k(x)}{\widehat\sigma_{\mathrm{global}}(x)+\varepsilon} + \frac{\widehat\mu_k(x) - \bar R_{\mathrm{global}}(x)}{\widehat\sigma_{\mathrm{global}}(x)+\varepsilon} && \text{(cancel $(\widehat\sigma_k(x)+\varepsilon)$)} \\[6pt]
  &= \frac{(R(\tau_i) - \widehat\mu_k(x)) + (\widehat\mu_k(x) - \bar R_{\mathrm{global}}(x))}{\widehat\sigma_{\mathrm{global}}(x)+\varepsilon} && \text{(combine terms)} \\[6pt]
  &= \frac{R(\tau_i) - \bar R_{\mathrm{global}}(x)}{\widehat\sigma_{\mathrm{global}}(x)+\varepsilon} && \text{(cancel $\widehat\mu_k(x)$)} \\[6pt]
  &= A_{\mathrm{GN}}(\tau_i). && \text{(definition of $A_{\mathrm{GN}}$)} \qedhere
\end{align*}
\end{proof}

\subsection{Proof of  \Cref{prop:purity_of_san_carrier}}
\label{proof:purity_of_san_carrier_compact}

\begin{proof}
The proof proceeds by direct calculation of the conditional expectation and variance for each estimator in the large-sample limit. All quantities are conditioned on a fixed prompt $x$.

First, we prove the results for $A_{\mathrm{SAN}}$. The SAN advantage for a trajectory $\tau$ in stratum $k$ is defined as $A_{\mathrm{SAN}}(\tau) = \frac{R(\tau) - \mu_k}{\sigma_k}$ (ignoring $\varepsilon$ for clarity in the limit).
By the linearity of expectation,
\begin{align*}
    \mathbb{E}[A_{\mathrm{SAN}}(\tau) \mid k, x] &= \mathbb{E}\left[ \frac{R(\tau) - \mu_k}{\sigma_k} \;\middle|\; k, x \right] \\
    &= \frac{1}{\sigma_k} \big( \mathbb{E}[R(\tau) \mid k, x] - \mu_k \big) \\
    &= \frac{1}{\sigma_k} (\mu_k - \mu_k) = 0.
\end{align*}
This shows that SAN is an unbiased signal carrier within each stratum. Next, by the properties of variance, $\operatorname{Var}(cX+d) = c^2\operatorname{Var}(X)$,
\begin{align*}
    \operatorname{Var}(A_{\mathrm{SAN}}(\tau) \mid k, x) &= \operatorname{Var}\left( \frac{R(\tau) - \mu_k}{\sigma_k} \;\middle|\; k, x \right) \\
    &= \frac{1}{\sigma_k^2} \operatorname{Var}(R(\tau) - \mu_k \mid k, x) \\
    &= \frac{1}{\sigma_k^2} \operatorname{Var}(R(\tau) \mid k, x) \\
    & = \frac{\sigma_k^2}{\sigma_k^2} = 1.
\end{align*}
This shows that SAN provides a consistently scaled (unit variance) carrier in every stratum.

Second, we prove the results for $A_{\mathrm{GN}}$.
The GN advantage is defined as $A_{\mathrm{GN}}(\tau) = \frac{R(\tau) - \mu}{\sigma}$. So
\begin{align*}
    \mathbb{E}[A_{\mathrm{GN}}(\tau) \mid k, x] &= \mathbb{E}\left[ \frac{R(\tau) - \mu}{\sigma} \;\middle|\; k, x \right] \\
    &= \frac{1}{\sigma} \big( \mathbb{E}[R(\tau) \mid k, x] - \mu \big) \\
    &= \frac{\mu_k - \mu}{\sigma}.
\end{align*}
This is generally non-zero whenever stratum means differ ($\mu_k \neq \mu$), which confirms that GN is a biased carrier within the stratum. This non-zero expectation represents a spurious signal. Next, 
\begin{align*}
    \operatorname{Var}(A_{\mathrm{GN}}(\tau) \mid k, x) &= \operatorname{Var}\left( \frac{R(\tau) - \mu}{\sigma} \;\middle|\; k, x \right) \\
    &= \frac{1}{\sigma^2} \operatorname{Var}(R(\tau) - \mu \mid k, x) \\
    &= \frac{1}{\sigma^2} \operatorname{Var}(R(\tau) \mid k, x) = \frac{\sigma_k^2}{\sigma^2}.
\end{align*}
This shows that the variance of the GN carrier is not consistent across strata; it depends on the ratio of the stratum's variance to the global variance.

The analysis of the conditional statistics reveals the structural superiority of SAN. Within any given stratum, SAN provides a zero-mean, unit-variance signal carrier, which serves as a pure and consistent baseline for the policy gradient calculation. GN, in contrast, introduces a spurious signal (a non-zero mean) and has an inconsistent variance, making it a biased and less reliable signal carrier. This completes the proof. \qedhere
\end{proof}

\subsection{Proof of \Cref{cor:global_moments}}
\label{proof:global_moments}

\begin{proof}
\textbf{For (a)}, applying the law of total expectation,
\begin{align*}
\mathbb{E}[A_{\mathrm{SAN}}\mid x]
& = \sum_k p_k(x)\,\mathbb{E}\!\left[\frac{R-\mu_k(x)}{\sigma_k(x)}\,\Big|\,S{=}k,x\right] \\
&= \sum_k p_k(x)\,\frac{\mathbb{E}[R-\mu_k(x)\mid S{=}k,x]}{\sigma_k(x)} \\
& =0. \\
\mathbb{E}[A_{\mathrm{GN}}\mid x] & = \frac{\mathbb{E}[R\mid x]-\mu(x)}{\sigma(x)}=0.
\end{align*}

\textbf{For (b)}, by the law of total variance, 
\[
\operatorname{Var}(A_{\mathrm{SAN}}\mid x)
= \mathbb{E}\!\left[\operatorname{Var}(A_{\mathrm{SAN}}\mid S,x)\right]
+ \operatorname{Var}\!\left(\mathbb{E}[A_{\mathrm{SAN}}\mid S,x]\right).
\]
We now analyze each of the two terms in the right-hand side separately.

From Theorem~\ref{prop:purity_of_san_carrier}, we know that the idealized SAN advantage is conditionally unbiased in every stratum:
\[
\mathbb{E}[A_{\mathrm{SAN}}\mid S=k,x] = 0, \quad \text{for all strata } k.
\]
Since the conditional expectation is a constant (zero) regardless of the stratum $S$, its variance is zero:
\[
\operatorname{Var}\!\left(\mathbb{E}[A_{\mathrm{SAN}}\mid S,x]\right) = \operatorname{Var}(0) = 0.
\]

Next, we evaluate the first term, which is the expected variance within strata. We start with the inner term, the variance conditional on a specific stratum $k$:
\begin{align*}
\operatorname{Var}(A_{\mathrm{SAN}}\mid S=k,x) &= \operatorname{Var}\left(\frac{R - \mu_k(x)}{\sigma_k(x)} \,\Bigg|\, k,x\right) \\
&= \frac{1}{\sigma^2_k(x)} \operatorname{Var}(R \mid k,x) \\
&= \frac{\sigma_k^2(x)}{\sigma^2_k(x)} = 1.
\end{align*}
Now, we take the expectation of this quantity over the random stratum $S$. This is equivalent to a weighted average over all strata, where the weights are the probabilities $p_k(x) := P(S=k \mid x)$:
\begin{align*}
\mathbb{E}\!\left[\operatorname{Var}(A_{\mathrm{SAN}}\mid S,x)\right] &= \sum_{k} p_k(x) \cdot \operatorname{Var}(A_{\mathrm{SAN}}\mid S=k,x) \\
&= \sum_{k} p_k(x) \frac{\sigma_k^2(x)}{\sigma^2_k(x)} = \sum_{k} p_k(x) = 1.
\end{align*}

Finally, we add the two components back together:
\begin{align*}
\operatorname{Var}(A_{\mathrm{SAN}}\mid x) &= \mathbb{E}\!\left[\operatorname{Var}(A_{\mathrm{SAN}}\mid S,x)\right] + \operatorname{Var}\!\left(\mathbb{E}[A_{\mathrm{SAN}}\mid S,x]\right) = 1 + 0 =1.
\end{align*}

For $\mathrm{GN}$, since $A_{\mathrm{GN}}=(R-\mu(x))/\sigma(x)$,
\[
\operatorname{Var}(A_{\mathrm{GN}}\mid x)=\frac{\operatorname{Var}(R\mid x)}{\sigma^2(x)}=\frac{\sigma^2(x)}{\sigma^2(x)} =1.
\qedhere
\]
\end{proof}

\section{ Comparison of Local vs. Global Moments of $\mathrm{SAN}$ and $\mathrm{GN}$ Advantages}
\label{moments_comp}

\begin{table}[h]
\centering
\caption{Local (conditional on stratum $S{=}k$) vs.\ global (marginal over $S$) moments of $\mathrm{SAN}$ and $\mathrm{GN}$ advantages. Here $\sigma_k^2=\Var(R\mid S{=}k,x)$ and $\sigma^2=\Var(R\mid x)$. For $\varepsilon=0$, $\mathrm{SAN}$ achieves joint standardization (local \& global), whereas $\mathrm{GN}$ only standardizes globally.}
\vspace{4pt}
\begin{tabular}{lcc}
\toprule
 & \textbf{Local (given $S{=}k$)} & \textbf{Global (marginal over $S$)}\\
\midrule
$\mathrm{SAN}$ mean &
$0$ &
$0$\\
$\mathrm{SAN}$ variance &
$1$ &  $1$\\
\midrule
$\mathrm{GN}$ mean &
$\frac{\mu_k-\mu}{\sigma}$ &
$0$\\
$\mathrm{GN}$ variance &
$\frac{\sigma^2_k}{\sigma^2}$ &
$1$\\
\bottomrule
\end{tabular}
\label{tab:san_vs_gn_moments}
\end{table}

\section{Experiment Details}
\label{sec:experiment details}

\subsection{Training Settings} \label{hs}

We train our models on 8 GPUs using a global batch size of 256 and a mini-batch size of 256. The maximum sequence length is set to 4096 tokens, with maximum response length and retrieved content length of 500 tokens in each interaction turn. For rollout sampling, we use a temperature of 1.0 and a top-p value of 1.0. We use a learning rate of 1e-6 with a warm-up ratio of 0.1. Training is conducted for 200 steps. We use a KL divergence coefficient $\beta$ of 0.001 and a clipping ratio $\epsilon$ of 0.2. For both GRPO and Stratified GRPO, we sample 8 responses per prompt. Stratified GRPO uses $\alpha$ of 0.8 for Qwen 2.5 3B Instruct and 0.6 for Qwen 2.5 3B Base. The number of maximum interaction turns is set to 4, and we retrieve the top 3 passages for each search call. Our implementation is based on the Verl framework \citep{verl}.

\section{LLM Usage Disclosure}

We utilized a large language model (LLM) as a writing assistant to enhance the clarity, grammar, and readability of this manuscript. The LLM's role was limited to tasks such as rewording sentences, refining technical descriptions, and formatting tables into \LaTeX. All scientific content—including the conceptual framework, analysis, experiments, and conclusions—was developed exclusively by the human authors. The authors have meticulously reviewed all LLM-generated suggestions and bear full responsibility for the scientific integrity and final content of this work.

\end{document}